\definecolor{codegreen}{rgb}{0,0.6,0}
\definecolor{codegray}{rgb}{0.5,0.5,0.5}
\definecolor{codepurple}{rgb}{0.58,0,0.82}
\definecolor{backcolour}{rgb}{0.95,0.95,0.92}
\lstdefinestyle{mystyle}{
    backgroundcolor=\color{backcolour},   
    commentstyle=\color{codegreen},
    keywordstyle=\color{magenta},
    numberstyle=\tiny\color{codegray},
    stringstyle=\color{codepurple},
    basicstyle=\ttfamily\footnotesize,
    breakatwhitespace=false,         
    breaklines=true,                 
    captionpos=b,                    
    keepspaces=true,                 
    numbers=left,                    
    numbersep=5pt,                  
    showspaces=false,                
    showstringspaces=false,
    showtabs=false,                  
    tabsize=2
}
\newtheorem{theorem}{Theorem}
\newtheorem{lemma}{Lemma}
\newcommand{\etal}{\textit{et al.}}
\title{Disentangling Human Error from the Ground Truth in Segmentation of Medical Images}
\author{%
  Le Zhang$^{1,2}$\thanks{These authors contributed equally.}, $\,\,$ Ryutaro Tanno$^{2,3*}$, $\,$Mou-Cheng Xu$^{2}$, Chen Jin$^{2}$, \\\textbf{Joseph Jacob}$^2$, \textbf{Olga Ciccarelli$^1$, Frederik Barkhof$^{1,2}$} and \textbf{Daniel C. Alexander$^2$}\vspace{2mm} \\
  $^1$Queen Square Multiple Sclerosis Centre, Department of Neuroinflammation, \\ Queen Square Institute of Neurology, Faculty of Brain Sciences, \\ University College London, London, UK.\\
  $^2$Centre for Medical Image Computing, Department of Computer Science, \\University College London, London, UK.\\
  $^3$ Healthcare Intelligence, Microsoft Research, Cambridge, UK   \vspace{2mm}\\
  \texttt{le.zhang@ucl.ac.uk} \\
  \texttt{rytanno@microsoft.com}
}
\begin{document}

\maketitle
\vspace{-1em}
\begin{abstract}

Recent years have seen increasing use of supervised learning methods for segmentation tasks. However, the predictive performance of these algorithms depends on the quality of labels. This problem is particularly pertinent in the medical image domain, where both the annotation cost and inter-observer variability are high. In a typical label acquisition process, different human experts provide their estimates of the ``true'' segmentation labels under the influence of their own biases and competence levels. Treating these noisy labels blindly as the ground truth limits the performance that automatic segmentation algorithms can achieve. In this work, we present a method for jointly learning, from purely noisy observations alone, the reliability of individual annotators and the true segmentation label distributions, using two coupled CNNs. The separation of the two is achieved by encouraging the estimated annotators to be maximally unreliable while achieving high fidelity with the noisy training data. We first define a toy segmentation dataset based on MNIST and study the properties of the proposed algorithm. We then demonstrate the utility of the method on three public medical imaging segmentation datasets with simulated (when necessary) and real diverse annotations: 1) MSLSC (multiple-sclerosis lesions); 2) BraTS (brain tumours); 3) LIDC-IDRI (lung abnormalities). In all cases, our method outperforms competing methods and relevant baselines particularly in cases where the number of annotations is small and the amount of disagreement is large. The experiments also show strong ability to capture the complex spatial characteristics of annotators' mistakes. Our code is available at \url{https://github.com/moucheng2017/Learn_Noisy_Labels_Medical_Images}.
\end{abstract}


\section{Introduction}


Segmentation of anatomical structures in medical images is known to suffer from high inter-reader variability \cite{lazarus2006bi,watadani2013interobserver,rosenkrantz2013comparison,menze2014multimodal,joskowicz2019inter}, influencing the performance of downstream supervised machine learning models. This problem is particularly prominent in the medical domain where the labelled data is commonly scarce due to the high cost of annotations. For instance, accurate identification of multiple sclerosis (MS) lesions in MRIs is difficult even for experienced experts due to variability in lesion location, size, shape and anatomical variability across patients \cite{zhang2019multiple}. Another example \cite{menze2014multimodal} reports the average inter-reader variability in the range 74-85\% for glioblastoma (a type of brain tumour) segmentation. Further aggravated by differences in biases and levels of expertise, segmentation annotations of structures in medical images suffer from high annotation variations \cite{kats2019soft}. In consequence, despite the present abundance of medical imaging data thanks to over two decades of digitisation, the world still remains relatively short of access to data with curated labels \cite{harvey2019standardised}, that is amenable to machine learning, necessitating intelligent methods to learn robustly from such noisy annotations.

To mitigate inter-reader variations, different pre-processing techniques are commonly used to curate segmentation annotations by fusing labels from different experts. The most basic yet popular approach is based on the majority vote where the most representative opinion of the experts is treated as the ground truth (GT). A smarter version that accounts for similarity of classes has proven effective in aggregation of brain tumour segmentation labels \cite{menze2014multimodal}. A key limitation of such approaches, however, is that all experts are assumed to be equally reliable. Warfield \etal \cite{warfield2004simultaneous} proposed a label fusion method, called STAPLE that explicitly models the reliability of individual experts and uses that information to ``weigh'' their opinions in the label aggregation step. After consistent demonstration of its superiority over the standard majority-vote pre-processing in multiple applications, STAPLE has become the go-to label fusion method in the creation of public medical image segmentation datasets e.g., ISLES \cite{winzeck2018isles}, MSSeg \cite{commowick2018objective}, Gleason'19 \cite{gleason2019} datasets. Asman \etal later extended this approach in \cite{asman2011robust} by accounting for voxel-wise consensus to address the issue of under-estimation of annotators' reliability. In \cite{asman2012formulating}, another extension was proposed in order to model the reliability of annotators across different pixels in images. More recently, within the context of multi-atlas segmentation problems \cite{iglesias2013unified} where image registration is used to warp segments from labeled images (``atlases'') onto a new scan, STAPLE has been enhanced in multiple ways to encode the information of the underlying images into the label aggregation process. A notable example is STEP proposed in Cardoso \etal \cite{cardoso2013steps} who designed a strategy to further incorporate the local morphological similarity between atlases and target images, and different extensions of this approach such as \cite{asman2013non,akhondi2014logarithmic} have since been considered. However, these previous label fusion approaches have a common drawback---they critically lack a mechanism to integrate information across different training images. This fundamentally limits the remit of applications to cases where each image comes with a reasonable number of annotations from multiple experts, which can be prohibitively expensive in practice. Moreover, relatively simplistic functions are used to model the relationship between observed noisy annotations, true labels and reliability of experts, which may fail to capture complex characteristics of human annotators.

In this work, we introduce the first instance of an end-to-end supervised segmentation method that jointly estimates, from noisy labels alone, the reliability of multiple human annotators and true segmentation labels. The proposed architecture (Fig.~\ref{picture 2}) consists of two coupled CNNs where one estimates the true segmentation probabilities and the other models the characteristics of individual annotators (e.g., tendency to over-segmentation, mix-up between different classes, etc) by estimating the pixel-wise confusion matrices (CMs) on a per image basis. Unlike STAPLE \cite{warfield2004simultaneous} and its variants, our method models, and disentangles with deep neural networks, the complex mappings from the input images to the annotator behaviours and to the true segmentation label. Furthermore, the parameters of the CNNs are ``global variables'' that are optimised across different image samples; this enables the model to disentangle robustly the annotators' mistakes and the true labels based on correlations between similar image samples, even when the number of available annotations is small per image (e.g., a single annotation per image). In contrast, this would not be possible with STAPLE \cite{warfield2004simultaneous} and its variants \cite{asman2012formulating,cardoso2013steps} where the annotators' parameters are estimated on every target image separately.



For evaluation, we first simulate a diverse range of annotator types on the MNIST dataset by performing morphometric operations with Morpho-MNIST framework \cite{castro2019morphomnist}. Then we demonstrate the potential in several real-world medical imaging datasets, namely (i) MS lesion segmentation dataset (MSLSC) from the ISBI 2015 challenge \cite{styner20083d}, (ii) Brain tumour segmentation dataset (BraTS) \cite{menze2014multimodal} and (iii) Lung nodule segmentation dataset (LIDC-IDRI) \cite{armato2011lung}. Experiments on all datasets demonstrate that our method consistently leads to better segmentation performance compared to widely adopted label-fusion methods and other relevant baselines, especially when the number of available labels for each image is low and the degree of annotator disagreement is high. 

\section{Related Work}
\vspace{-2mm}

The majority of algorithmic innovations in the space of \textit{label aggregation for segmentation} have uniquely originated from the medical imaging community, partly due to the prominence of the inter-reader variability problem in the field, and the wide-reaching values of reliable segmentation methods \cite{asman2012formulating}. The aforementioned methods based on the STAPLE-framework such as \cite{warfield2004simultaneous,asman2011robust,asman2012formulating,cardoso2013steps,weisenfeld2011learning,asman2013non,asman2013non,akhondi2014logarithmic,joskowicz2018automatic} are based on generative models of human behaviours, where the latent variables of interest are the unobserved true labels and the ``reliability'' of the respective annotators. Our method can be viewed as an instance of translation of the STAPLE-framework to the supervised learning paradigm. As such, our method produces a model that can segment test images without needing to acquire labels from annotators or atlases unlike STAPLE and its local variants. Another key difference is that our method is jointly trained on many different subjects while the STAPLE-variants are only fitted on a per-subject basis. This means that our method is able to learn from correlations between different subjects, which previous works have not attempted--- for example, our method uniquely can estimate the reliability and true labels even when there is only one label available per input image as shown later. 

Our work also relates to a recent strand of methods that aim to generate a set of diverse and plausible segmentation proposals on a given image. Notably, probabilistic U-net \cite{kohl2018probabilistic} and its recent variants, PHiSeg \cite{baumgartner2019phiseg} have shown that the aforementioned inter-reader variations in segmentation labels can be modelled with sophisticated forms of probabilistic CNNs. Such approaches, however, fundamentally differ from ours in that variable annotations from many experts in the training data are assumed to be all realistic instances of the true segmentation; we assume, on the other hand, that there is a single, unknown, true segmentation map of the underlying anatomy, and each individual annotator produces a noisy approximation to it with variations that reflect their individual characteristics. The latter assumption may be reasonable in the context of segmentation problems since there exists only one true boundary of the physical objects captured in an image while multiple hypothesis can arise from ambiguities in human interpretations.


We also note that, in standard classification problems, a plethora of different works have shown the utility of modelling the labeling process of human annotators in restoring the true label distribution \cite{raykar2010learning,khetan2017learning,tanno2019learning}. Such approaches can be categorized into two groups: (1) \textit{two-stage} approach \cite{dawid1979maximum,smyth1995inferring,whitehill2009whose,welinder2010multidimensional,rodrigues2013learning}, and (2) \textit{simultaneous} approach \cite{raykar2009supervised,yan2010modeling,branson2017lean,van2018lean,khetan2017learning,tanno2019learning,sudre2019let}. In the first category, the noisy labels are first curated through a probabilistic model of annotators, and subsequently, a supervised machine-learning model is trained on the curated labels. The initial attempt \cite{dawid1979maximum} was made in the early 1970s, and numerous advances such as \cite{smyth1995inferring,whitehill2009whose,welinder2010multidimensional,rodrigues2013learning} since built upon this work e.g. by estimating sample difficulty and human biases. In contrast, models in the second category aim to curate labels and learn a supervised model jointly in an end-to-end fashion \cite{raykar2009supervised,yan2010modeling,branson2017lean,van2018lean,khetan2017learning,tanno2019learning} so that the two components inform each other. Although the evidence still remains limited to the simple classification task, these \textit{simultaneous} approaches have shown promising improvements over the methods in the first category in terms of the predictive performance of the supervised model and the sample efficiency (i.e., fewer labels are required per input). However, to date very little attention has been paid to the same problem in more complicated, structured prediction tasks where the outputs are high dimensional. In this work, we propose the first \textit{simultaneous} approach to addressing such a problem for image segmentation, while drawing inspirations from the STAPLE framework \cite{warfield2004simultaneous} which would fall into the \textit{two-stage} approach category.

\begin{figure}[t]
    \centering
    \vspace{-2mm}
    \includegraphics[width=\linewidth]{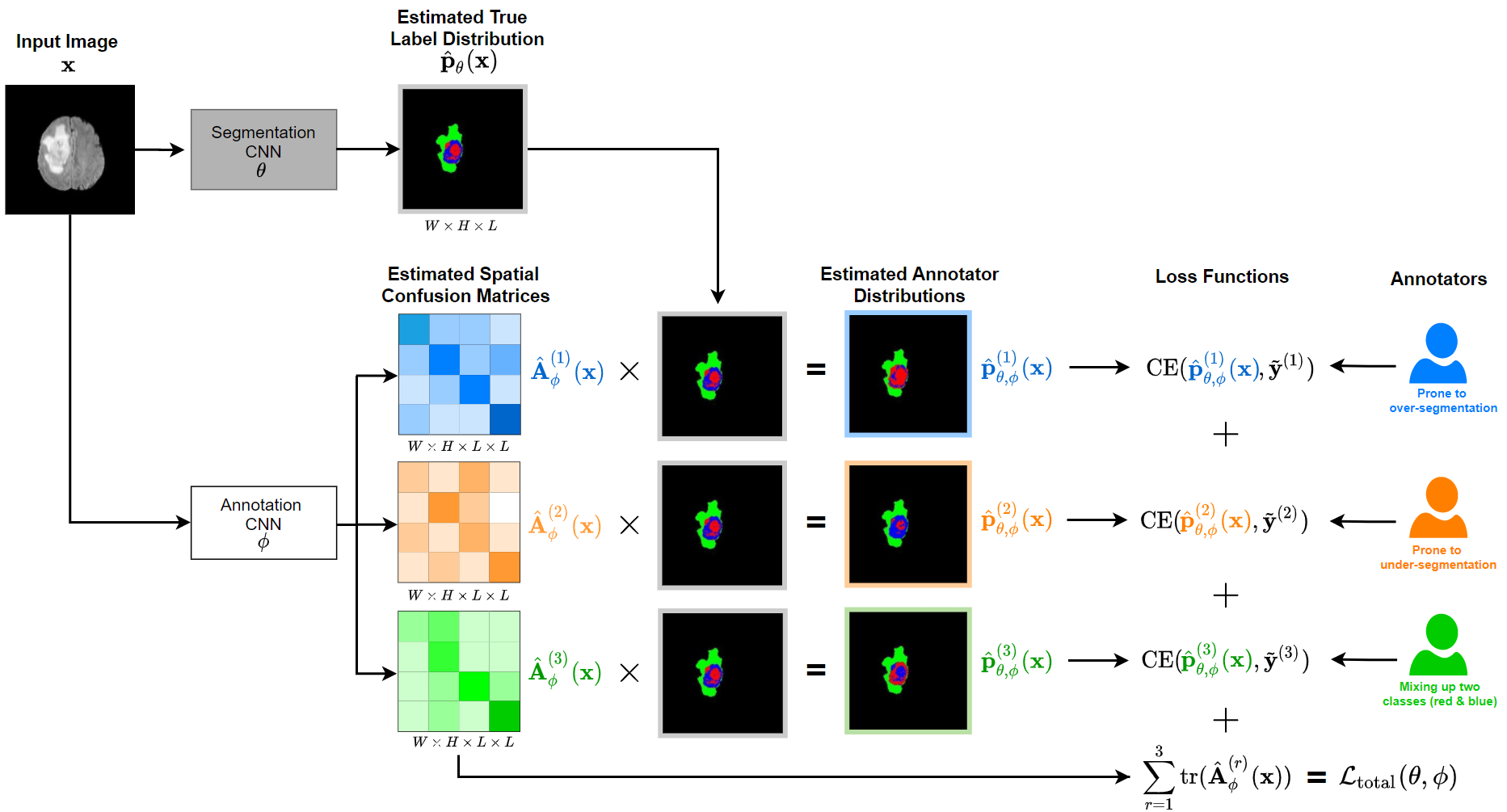}
    \caption{\footnotesize An architecture schematic in the presence of 3 annotators of varying characteristics (over-segmentation, under-segmentation and confusing between two classes, red and blue). The model consists of two parts: (1) \textit{segmentation network} parametrised by $\theta$ that generates an estimate of the unobserved true segmentation probabilities, $\textbf{p}_{\theta}(\textbf{x})$; (2) \textit{annotator network}, parametrised by $\phi$, that estimates the pixelwise confusion matrices (CMs), $\{\textbf{A}_{\phi}^{(r)}(\textbf{x})\}_{r=1}^{3}$ of the annotators  for the given input image $\textbf{x}$. During training, the estimated annotators distributions $\hat{\textbf{p}}_{\theta,\phi}^{(r)}(\textbf{x}):=\textbf{A}_{\phi}^{(r)}(\textbf{x})\cdot\textbf{p}_{\theta}(\textbf{x})$ are computed, and the parameters $\{\theta, \phi\}$ are learned by minimizing the sum of their cross-entropy losses with respect to the acquired noisy segmentation labels $\tilde{\mathbf{y}}^{(r)}$, and the trace of the estimated CMs. At test time, the output of the segmentation network, $\hat{\textbf{p}}_{\theta}(\textbf{x})$ is used to yield the prediction.}
    \label{picture 2}
\end{figure}

\section{Method}
\vspace{-2mm}




\subsection{Problem Set-up}
\vspace{-2mm}

In this work, we consider the problem of learning a supervised segmentation model from noisy labels acquired from multiple human annotators. Specifically, we consider a scenario where set of images $\{\textbf{x}_n \in \mathbb{R}^{W\times H\times C}\}_{n=1}^N$ (with $W, H, C$ denoting the width, height and channels of the image) are assigned with noisy segmentation labels $\{\tilde{\textbf{y}}_n^{(r)} \in \mathcal{Y}^{W\times H}\}_{n=1,...,N}^{r\in S(\mathbf{x}_i)}$ from multiple annotators where $\tilde{\textbf{y}}_n^{(r)}$ denotes the label from annotator $r \in \{1,...,R\}$ and $S(\mathbf{x}_n)$ denotes the set of all annotators who labelled image $\textbf{x}_i$ and $\mathcal{Y}=[1, 2,...,L]$ denotes the set of classes. 

Here we assume that every image $\mathbf{x}$ annotated by at least one person i.e., $|S(\mathbf{x})|\geq 1$, and no GT labels $\{\textbf{y}_n \in \mathcal{Y}^{W\times H} \}_{n=1,...,N}$ are available. The problem of interest here is to \textit{learn the unobserved true segmentation distribution $p(\textbf{y} \mid \textbf{x})$ from such noisy labelled dataset $\mathcal{D}=\{\textbf{x}_n, \tilde{\textbf{y}}_n^{(r)}\}^{r\in S(\mathbf{x}_n)}_{n=1,...,N}$} i.e., the combination of images, noisy annotations and experts' identities for labels (which label was obtained from whom). 

We also emphasise that \textit{the goal at inference time is to segment a given unlabelled test image} but not to fuse multiple available labels as is typically done in multi-atlas segmentation approaches \cite{iglesias2013unified}. 

\vspace{-2mm}
\subsection{Probabilistic Model and Proposed Architecture}
Here we describe the probabilistic model of the observed noisy labels from multiple annotators. We make two key assumptions: (1) annotators are statistically independent, (2) annotations over different pixels are independent given the input image. Under these assumptions, the probability of observing noisy labels $\{\tilde{\textbf{y}}^{(r)}\}_{r\in S(\mathbf{x})}$ on $\textbf{x}$ factorises as:
\begin{equation}
    p(\{\tilde{\textbf{y}}^{(r)}\}_{r\in S(\mathbf{x})}\mid \textbf{x})
    =\prod_{r\in S(\mathbf{x})}p(\tilde{\textbf{y}}^{(r)}\mid \textbf{x})
    = \prod_{r\in S(\mathbf{x})}\prod_{\substack{w\in\{1,...,W\}\\h\in\{1,...,H\}}} p(\tilde{y}^{(r)}_{wh} \mid \textbf{x})
    \label{eq:joint_probs}
\end{equation}
where $\tilde{y}^{(r)}_{wh} \in [1, ..., L]$ denotes the $(w, h)^{\text{th}}$ elements of $\tilde{\textbf{y}}^{(r)}\in \mathcal{Y}^{W\times H}$. Now we rewrite the probability of observing each noisy label on each pixel $(w, h)$ as: 
\begin{equation}
    p(\tilde{y}^{(r)}_{wh}\mid \textbf{x})= \sum_{y_{wh}= 1}^{L} p(\tilde{y}^{(r)}_{wh}\mid y_{wh},\textbf{x}) \cdot  p(y_{wh}\mid \textbf{x}) 
    \label{eq:confusion_matrix}
\end{equation}
where $p(y_{wh}\mid\textbf{x})$ denotes the GT label distribution over the $(w, h)^{\text{th}}$ pixel in the image $\textbf{x}$, and $p(\tilde{y}^{(r)}_{wh}\mid y_{wh},\textbf{x})$ describes the noisy labelling process by which annotator $r$ corrupts the true segmentation label. In particular, we refer to the $L\times L$ matrix whose each $(i,j)^{\text{th}}$ element is defined by the second term $\textbf{a}^{(r)}(\mathbf{x},w,h)_{ij}:=p(\tilde{y}^{(r)}_{wh}=i\mid y_{wh}=j,\textbf{x})$ as the CM of annotator $r$ at pixel $(w, h)$ in image $\mathbf{x}$. 

We introduce a CNN-based architecture which models the different constituents in the above joint probability distribution $p(\{\tilde{\textbf{y}}^{(r)}\}_{r\in S(\mathbf{x})}\mid \textbf{x})$ as illustrated in Fig. \ref{picture 2}. The model consists of two components: (1) \textit{Segmentation Network}, parametrised by $\theta$, which estimates the GT segmentation probability map, $\hat{\textbf{p}}_{\theta}(\textbf{x}) \in \mathbb{R}^{W\times H \times L}$ whose each $(w, h, i)^\text{th}$ element approximates $p(y_{wh}=i\mid \textbf{x})$;(2) \textit{Annotator Network}, parametrised by $\phi$, that generate estimates of the pixel-wise CMs of respective annotators as a function of the input image, $\{\hat{\textbf{A}}_{\phi}^{(r)}(\textbf{x})\in [0,1]^{W\times H\times L \times L}\}_{r=1}^{R}$ whose each $(w, h, i, j)^\text{th}$ element approximates $p(\tilde{y}^{(r)}_{wh}=i\mid y_{wh}=j,\textbf{x})$. Each product ${\hat{\textbf{p}}_{\theta, \phi}^{(r)}}(\textbf{x}):=\hat{\textbf{A}}_{\phi}^{(r)}(\textbf{x})\cdot \hat{\textbf{p}}_\theta (\textbf{x})$ represents the estimated segmentation probability map of the corresponding annotator. Note that here ``$\,\cdot\,$'' denotes the element-wise matrix multiplications in the spatial dimensions $W, H$. At inference time, we use the output of the segmentation network ${\hat{\textbf{p}}_\theta }(\textbf{x})$ to segment test images. 

We note that each spatial CM, $\hat{\textbf{A}}_{\phi}^{(r)}(\textbf{x})$ contains $WHL^2$ variables, 
and calculating the corresponding annotator's prediction $\hat{\textbf{p}}_{\theta, \phi}^{(r)}(\textbf{x})$ requires $WH(2L-1)L$ floating-point operations, potentially incurring a large time/space cost when the number of classes is large. Although not the focus of this work (as we are concerned with medical imaging applications for which the number of classes are mostly limited to less than 10), we also consider a low-rank approximation (rank$=1$) scheme to alleviate this issue wherever appropriate. More details are provided in the supplementary.

\subsection{Learning Spatial Confusion Matrices and True Segmentation}
\vspace{-2mm}

Next, we describe how we jointly optimise the parameters of segmentation network, $\theta$ and the parameters of annotator network, $\phi$. In short, we minimise the negative log-likelihood of the probabilistic model plus a regularisation term via stochastic gradient descent. A detailed description is provided below. 

Given training input $\textbf{X}=\{\textbf{x}_n\}_{n=1}^N$ and noisy labels ${\tilde{\textbf{Y}}^{(r)}} = \{ \tilde{ \textbf{y}}_n^{(r)}: r \in S(\textbf{x}_n)\} _{n = 1}^N$ for $r=1,...,R$, we optimaize the parameters $\{ \theta , \phi \}$ by minimizing the negative log-likelihood (NLL), $ - \log p({\tilde{\textbf{Y}}^{(1)}},...,{\tilde{\textbf{Y}}^{(R)}}\left| \textbf{X} \right.)$. From eqs.~\eqref{eq:joint_probs} and \eqref{eq:confusion_matrix}, this optimization objective equates to the sum of cross-entropy losses between the observed noisy segmentations and the estimated annotator label distributions:
\begin{equation}
    - \log p({\tilde{\textbf{Y}}^{(1)}},...,{\tilde{\textbf{Y}}^{(R)}}\left| \textbf{X} \right.) = \sum\limits_{n = 1}^N {\sum\limits_{r = 1}^R  \mathds{1}(r \in \mathcal{S}({\textbf{x}_n}))}  \cdot
    \text{CE}\big{(}\hat{\textbf{A}}_{\phi}^{(r)}(\textbf{x}_n)\cdot \hat{\textbf{p}}_\theta ({\textbf{x}_n}), \hspace{1mm} \tilde{\textbf{y}}_n^{(r)}\big{)} 
    \label{eq3}
\end{equation}
Minimizing the above encourages each annotator-specific predictions $\hat{\textbf{p}}_{\theta, \phi}^{(r)}(\textbf{x})$ to be as close as possible to the true noisy label distribution of the annotator ${\textbf{p}^{(r)}}(\textbf{x})$. However, this loss function alone is not capable of separating the annotation noise from the true label distribution; there are many combinations of pairs $ {\hat{\textbf{A}}_{\phi}^{(r)}}(\mathbf{x})$ and segmentation model $\hat{\textbf{p}}_\theta(\mathbf{x})$ such that $\hat{\textbf{p}}_{\theta, \phi}^{(r)}(\textbf{x})$ perfectly matches the true annotator's distribution $\textbf{p}^{(r)}(\mathbf{x})$ for any input $\textbf{x}$ (e.g., permutations of rows in the CMs). To combat this problem, inspired by Tanno \etal \cite{tanno2019learning}, which addressed an analogous issue for the classification task, we add the trace of the estimated CMs to the loss function in Eq. \eqref{eq3} as a regularisation term (see Sec~\ref{sec:trace_theory}). We thus optimize the combined loss:
\begin{equation}
    \mathcal{L}_{\text{total}}(\theta, \phi) := \sum\limits_{n = 1}^N {\sum\limits_{r = 1}^R \mathds{1}(r \in \mathcal{S}({\textbf{x}_n}))} \cdot \Big{[} \text{CE}\big{(}\hat{\textbf{A}}_{\phi}^{(r)}(\textbf{x}_n)\cdot \hat{\textbf{p}}_\theta({\textbf{x}_n}),\hspace{1mm} \tilde{\textbf{y}}_n^{(r)}\big{)}\hspace{1mm} + \hspace{1mm}\lambda\cdot \text{tr}\big{(}\hat{\textbf{A}}_{\phi}^{(r)}(\textbf{x}_n)\big{)}\Big{]}
    \label{eq4}
\end{equation}
where $\mathcal{S}({\textbf{x}}))$ denotes the set of all labels available for image $\textbf{x}$, and $\text{tr}({\textbf{A}})$ denotes the trace of matrix $\textbf{A}$. The mean trace represents the average probability that a randomly selected annotator provides an accurate label. Intuitively, minimising the trace encourages the estimated annotators to be maximally unreliable while minimising the cross entropy ensures fidelity with observed noisy annotators. We minimise this combined loss via stochastic gradient descent to learn both $\{ \theta ,\phi\}$.  


\subsection{Justification for the Trace Norm}\label{sec:trace_theory}
\vspace{-2mm}

Here we provide a further justification for using the trace regularisation. Tanno \etal \cite{tanno2019learning} showed that if the average CM of annotators is \textit{diagonally dominant}, and the cross-entropy term in the loss function is zero, minimising the trace of the estimated CMs uniquely recovers the true CMs. However, their results concern properties of the average CMs of both the annotators and the classifier over the data population, rather than individual data samples. We show a similar but slightly weaker result in the sample-specific regime, which is more relevant as we estimate CMs of respective annotators on every input image. 

First, let us set up the notations. For brevity, for a given input image $\mathbf{x} \in \mathbb{R}^{W\times H \times C}$, we denote the ground-truth CM of annotator $r$ at $(i, j)^{\text{th}}$ pixel and its estimate by $\textbf{A}^{(r)} := [\textbf{A}^{(r)}(\mathbf{x})_{ij}]$ and $\hat{\textbf{A}}^{(r)} := [\hat{\textbf{A}}^{(r)}(\mathbf{x})_{ij}] \in [0, 1]^{L\times L}$, respectively. We also define the mean CM $\textbf{A}^*:= \sum_{r=1}^R \pi_r \textbf{A}^{(r)}$ and its estimate $\hat{\textbf{A}}^{*}:=\sum_{r=1}^R \pi_r \hat{\textbf{A}}^{(r)}$ where $\pi_r\in [0,1]$ is the probability that the annotator $r$ labels image $\mathbf{x}$. Lastly, as we stated earlier, we assume there is a single GT segmentation label per image --- thus the true $L$-dimensional probability vector at pixel $(i, j)$ takes the form of a one-hot vector i.e., $\textbf{p}(\textbf{x}) = \mathbf{e}_k$ for, say, class $k \in [1, ...,L]$.  Then, the followings result motivates the use of the trace regularisation: 


\vspace{3mm}

\begin{theorem}\label{theorem:main}  
If the annotator's segmentation probabilities are perfectly modelled by the model for the given image i.e., $\hat{\textbf{A}}^{(r)}\hat{\textbf{p}}_\theta(\textbf{x})=\textbf{A}^{(r)}\textbf{p}(\textbf{x}) \forall r=1,...,R$, and the average true confusion matrix $\textbf{A}^{*}$ at a given pixel and its estimate $\hat{\textbf{A}}^{*}$ satisfy that $a^*_{kk} > a^*_{kj}$ for $j \neq k$ and $\hat{a}^*_{ii} > \hat{a}^*_{ij}$ for all $i, j$ such that $j \neq i$, then  $\textbf{A}^{(1)}, ..., \textbf{A}^{(R)} = \text{argmin }_{\hat{\textbf{A}}^{(1)}, ..., \hat{\textbf{A}}^{(R)}}\Big{[}\text{tr}(\hat{\textbf{A}}^{*})\Big{]}$ and such solutions are \textbf{unique} in the $k^{\text{th}}$ column where $k$ is the correct pixel class.


\end{theorem}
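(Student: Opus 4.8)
The plan is to first collapse the joint optimisation over $\hat{\mathbf A}^{(1)},\dots,\hat{\mathbf A}^{(R)}$ into a statement about their $\pi$-weighted average $\hat{\mathbf A}^{*}$, and then extract the conclusion from the two diagonal-dominance hypotheses. Multiplying each identity $\hat{\mathbf A}^{(r)}\hat{\mathbf p}_\theta(\mathbf x)=\mathbf A^{(r)}\mathbf p(\mathbf x)$ by $\pi_r$, summing over $r$, and using $\mathbf p(\mathbf x)=\mathbf e_k$ yields the single linear constraint $\hat{\mathbf A}^{*}\hat{\mathbf p}_\theta(\mathbf x)=\mathbf A^{*}\mathbf e_k$, i.e.\ $\hat{\mathbf A}^{*}\hat{\mathbf p}_\theta(\mathbf x)$ equals the $k$-th column of the true mean confusion matrix. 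Since $\text{tr}(\hat{\mathbf A}^{*})=\sum_{r}\pi_r\,\text{tr}(\hat{\mathbf A}^{(r)})$ by linearity of the trace, minimising the regulariser of Eq.~\eqref{eq4} is the same as minimising $\text{tr}(\hat{\mathbf A}^{*})$ over column-stochastic $\hat{\mathbf A}^{(r)}$ and probability vectors $\hat{\mathbf p}_\theta(\mathbf x)$ subject to this constraint together with the diagonal-dominance condition on $\hat{\mathbf A}^{*}$.

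The second step produces two inequalities. Reading off the $k$-th coordinate of the constraint gives $\sum_{j}\hat a^{*}_{kj}\,(\hat{\mathbf p}_\theta(\mathbf x))_j=a^{*}_{kk}$; since $\hat{\mathbf p}_\theta(\mathbf x)$ is a probability vector and row $k$ of $\hat{\mathbf A}^{*}$ is \emph{strictly} diagonally dominant, the left-hand side is a convex combination that is at most $\hat a^{*}_{kk}$, so $\hat a^{*}_{kk}\ge a^{*}_{kk}$, with equality exactly when $(\hat{\mathbf p}_\theta(\mathbf x))_j=0$ for all $j\neq k$, i.e.\ $\hat{\mathbf p}_\theta(\mathbf x)=\mathbf e_k$. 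For the trace itself, row-wise dominance gives $\hat a^{*}_{ii}\ge\frac1L\sum_j\hat a^{*}_{ij}$ for each $i$; summing over $i$ and using column-stochasticity ($\sum_{i,j}\hat a^{*}_{ij}=L$) gives $\text{tr}(\hat{\mathbf A}^{*})\ge 1$, with equality forcing every row of $\hat{\mathbf A}^{*}$ to be constant. Putting these together, at a minimiser the rows of $\hat{\mathbf A}^{*}$ are (in the limit) constant, and substituting back into $\hat{\mathbf A}^{*}\hat{\mathbf p}_\theta(\mathbf x)=\mathbf A^{*}\mathbf e_k$ forces every column of $\hat{\mathbf A}^{*}$ — in particular the $k$-th — to equal the true $k$-th column; running the same coordinate argument on each $\hat{\mathbf A}^{(r)}$ then pins its $k$-th column to the true one, which is the asserted recovery along the correct class $k$ and the uniqueness of the minimiser in the $k$-th column.

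I expect the main obstacle to be exactly this last part: the one-hot assumption $\mathbf p(\mathbf x)=\mathbf e_k$ means the columns $j\neq k$ of the $\hat{\mathbf A}^{(r)}$ never appear in the data-fit term, so there is a genuine continuum of trace-minimisers and nothing beyond column $k$ can be identified — this is precisely why the guarantee is weaker than the population-level statement of Tanno \etal~\cite{tanno2019learning}, and it forces the argument to be phrased purely in terms of the $k$-th columns. Getting this right means carefully coordinating three constraints near the optimum — the strict dominance in row $k$, the merely weak full diagonal dominance of $\hat{\mathbf A}^{*}$, and column-stochasticity — so as to exclude spurious minimisers whose $k$-th column leaks probability onto a wrong class, and to confirm that $\text{tr}(\hat{\mathbf A}^{*})\ge 1$ (rather than a naive $\text{tr}(\hat{\mathbf A}^{*})\ge\text{tr}(\mathbf A^{*})$, which is in fact false) is the operative bound; handling whether the minimum is attained or only approached, and transferring the conclusion from $\hat{\mathbf A}^{*}$ back to the individual $\hat{\mathbf A}^{(r)}$, are the remaining fiddly points.
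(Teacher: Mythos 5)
Your reduction to the mean matrix $\hat{\textbf{A}}^{*}$ and your first inequality are sound and coincide with the paper's key step: reading off the $k$-th coordinate of $\hat{\textbf{A}}^{*}\hat{\textbf{p}}_\theta(\textbf{x})=\textbf{A}^{*}\mathbf{e}_k$ together with strict dominance in row $k$ gives $a^*_{kk}=\sum_j \hat{a}^*_{kj}[\hat{\textbf{p}}_\theta(\textbf{x})]_j\le \hat{a}^*_{kk}$, with equality precisely when $\hat{\textbf{p}}_\theta(\textbf{x})=\mathbf{e}_k$; and once $\hat{\textbf{p}}_\theta(\textbf{x})=\mathbf{e}_k$ is established, each constraint $\hat{\textbf{A}}^{(r)}\hat{\textbf{p}}_\theta(\textbf{x})=\textbf{A}^{(r)}\textbf{p}(\textbf{x})$ pins the $k$-th column of $\hat{\textbf{A}}^{(r)}$ to the true one. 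This is exactly the content of the paper's Lemma~1 applied to the $\pi$-weighted average.

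The gap is in how you close the argument. You discard the comparison $\text{tr}(\hat{\textbf{A}}^{*})\ge\text{tr}(\textbf{A}^{*})$ --- which you declare ``in fact false'' --- in favour of the bound $\text{tr}(\hat{\textbf{A}}^{*})\ge 1$ and its equality case (constant rows). This cannot deliver the theorem: the argmin claim \emph{is} the statement that $\text{tr}(\hat{\textbf{A}}^{*})\ge\text{tr}(\textbf{A}^{*})$ for every feasible competitor, so if that inequality were genuinely false there would be nothing to prove by another route. Moreover, the value $1$ is not the constrained minimum (the constraint already forces $\hat{a}^*_{kk}\ge a^*_{kk}$, and a constant $k$-th row contradicts the assumed strict dominance), so the equality case of your bound is never attained and ``at a minimiser the rows are (in the limit) constant'' is vacuous; in particular it does not give you $\hat{a}^*_{kk}=a^*_{kk}$, which is what you need to trigger $\hat{\textbf{p}}_\theta(\textbf{x})=\mathbf{e}_k$ and hence the column recovery. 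The paper instead exploits the single-ground-truth convention that the columns $j\ne k$ of the true CMs are uniformly $1/L$, so that $\text{tr}(\textbf{A}^{*})=\tfrac{L-1}{L}+a^*_{kk}$, and bounds the estimated trace term by term: $\hat{a}^*_{ii}\ge 1/L$ for $i\ne k$ and $\hat{a}^*_{kk}\ge a^*_{kk}$; equality in the trace then forces $\hat{a}^*_{kk}=a^*_{kk}$, hence $\hat{\textbf{p}}_\theta(\textbf{x})=\mathbf{e}_k$, hence agreement of the $k$-th columns. (Your suspicion that $\hat{a}^*_{ii}\ge 1/L$ does not follow from row-wise dominance plus column-stochasticity alone is a legitimate criticism of the paper's own proof, but the repair is to justify or assume that per-diagonal bound, not to replace the comparison with $\text{tr}(\textbf{A}^{*})$ by a comparison with $1$.)
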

\vspace{2mm}
The corresponding proof is provided in the supplementary material. The above result shows that if each estimated annotator's distribution $\hat{\textbf{A}}^{(r)}\hat{\textbf{p}}_{\theta}(\mathbf{x})$ is very close to the true noisy distribution $\textbf{p}^{(r)}(\mathbf{x})$ (which is encouraged by minimizing the cross-entropy loss), and for a given pixel, the average CM has the k$^{\text{th}}$ diagonal entry larger than any other entries in the same row \footnote{For the standard ``majority vote'' label to capture the correct true labels, one requires the k$^{\text{th}}$ diagonal element in the average CM to be larger than the sum of the remaining elements in the same row, which is a more strict condition.}, then minimizing its trace will drive the estimates of the $k^{\text{th}}$ (`correct class') columns in the respective annotator's CMs to match the true values. Although this result is weaker than what was shown in \cite{tanno2019learning} for the population setting rather than the individual samples, the single-ground-truth assumption means that the remaining values of the CMs are uniformly equal to $1/L$, and thus it suffices to recover the column of the correct class.  

To encourage $\{\hat{\mathbf{A}}^{(1)}, ..., \hat{\mathbf{A}}^{(R)}\}$ to be also diagonally dominant, we initialize them with identity matrices by training the \textit{annotation network} to maximise the trace for sufficient iterations as a warm-up period. Intuitively, the combination of the trace term and cross-entropy separates the true distribution from the annotation noise by finding the maximal amount of confusion which explains the noisy observations well. 

\begin{figure}[t]
    \centering
    \vspace{-5mm}
    \includegraphics[width=\linewidth]{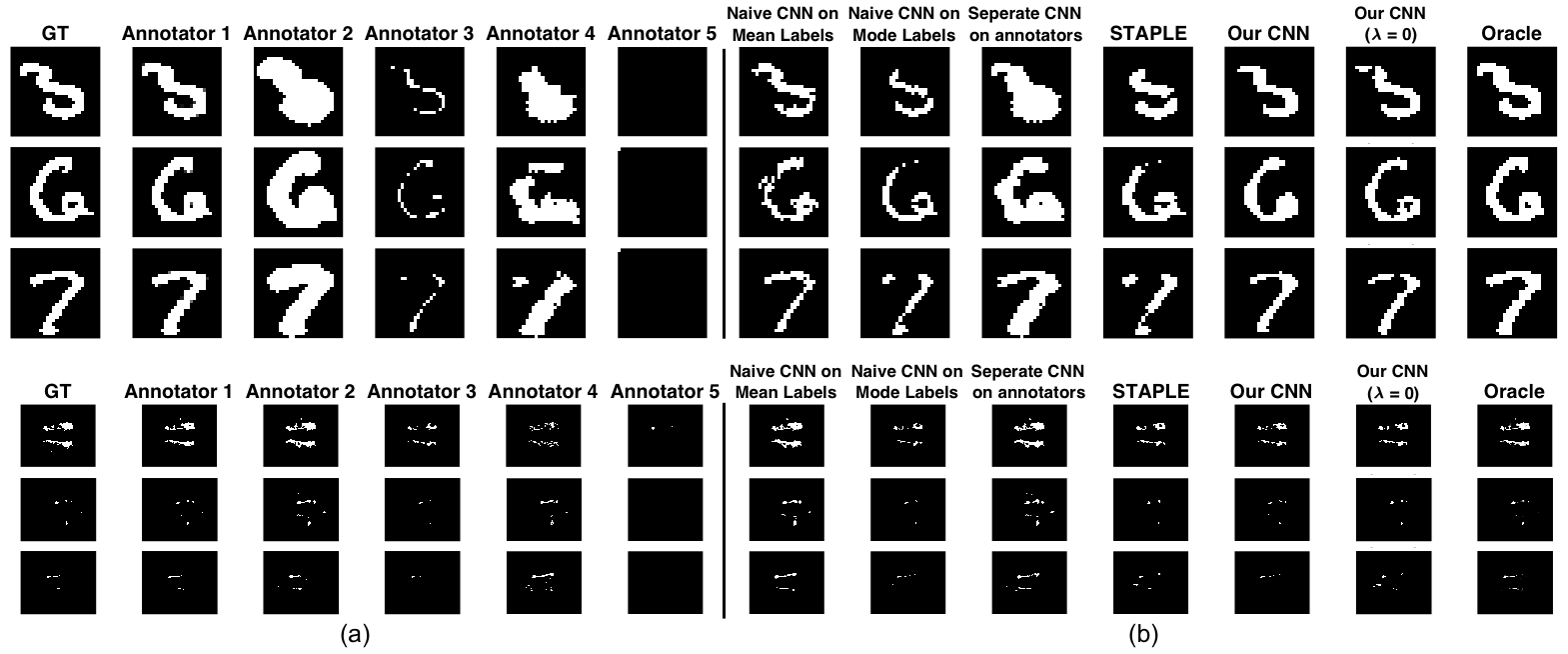}
    \vspace{-3mm}
    \caption{\footnotesize Visualisation of segmentation labels on two datasets: (a) ground-truth (GT) and segmentation labels from simulated annotators (Annotators 1 - 5); (b) the predictions from the supervised models.}
    \label{MNIST and MS segmentation results}
    \vspace{-2mm}
\end{figure}

\vspace{-3mm}
\section{Experiments}
\vspace{-2mm}
We evaluate our method on a variety of datasets including both synthetic and real-world scenarios:1) for MNIST segmentation and ISBI2015 MS lesion segmentation challenge dataset \cite{jesson2015hierarchical}, we apply morphological operations to generate synthetic noisy labels in binary segmentation tasks; 2) for BraTS 2019 dataset \cite{menze2014multimodal}, we apply similar simulation to create noisy labels in a multi-class segmentation task; 3) we also consider the LIDC-IDRI dataset which contains multiple annotations per input acquired from different clinical experts as the evaluation in practice. The etails of noisy label simulation can be found in Appendix \ref{data simulation}. 


Our experiments are based on the assumption that no ground-truth (GT) label is not known a priori, hence, we compare our method against multiple label fusion methods. In particular, we consider four label fusion baselines: a) mean of all of the noisy labels; b) mode labels by taking the ``majority vote''; c) label fusion via the original STAPLE method \cite{warfield2004simultaneous}; d) Spatial STAPLE, a more recent extension of c) that accounts for spatial variations in CMs. After curating the noisy annotations via the above methods, we train the segmentation network and report the results. For c) and d), we used the toolkit\footnote{https://www.nitrc.org/projects/masi-fusion/}. To get an upper-bound performance, we also include the \textit{oracle} model that is directly trained on the ground-truth annotations. To test the value of the proposed image-dependent spatial CMs, we also include ``Global CM'' model where a single CM is learned per annotator but fixed across pixels and images (analogous to \etal \cite{raykar2009supervised,khetan2017learning,tanno2019learning}, but in segmentation task). Lastly, we also compare against a recent method called Probabilistic U-net as another baseline, which has been shown to capture inter-reader variations accurately. The details are presented in Appendix \ref{Appendix MNIST and MS}. 

For evaluation metrics, we use: 1) root-MSE between estimated CMs and real CMs; 2) Dice coefficient (DICE) between estimated segmentation and true segmentation; 3) The generalized energy distance proposed in \cite{kohl2018probabilistic} to measure the quality of the estimated annotator's labels.  


\subsection{MNIST and MS lesion segmentation datasets}
MNIST dataset consists of 60,000 training and 10,000 testing examples, all of which are 28 $\times$ 28 grayscale images of digits from 0 to 9, and we derive the segmentation labels by thresholding the intensity values at 0.5. The MS dataset is publicly available and comprises 21 3D scans from 5 subjects. All scans are split into 10 for training and 11 for testing. We hold out 20\% of training images as a validation set for both datasets. On both datasets, our proposed model achieves a higher dice similarity coefficient than STAPLE on the dense label case and, even more prominently, on the single label (i.e., randomly choose 1 label per image, aka, ``one label per image'') case (shown in Tables.~\ref{denselabel of MNIST and MS}\&\ref{singlelabel of MNIST and MS} and Fig.~\ref{MNIST and MS segmentation results}). In addition, our model outperforms STAPLE without or with trace norm, in terms of CM estimation, specifically, we could achieve an increase at $6.3\%$. Additionally, we include the performance on different regularisation coefficient, which is presented in Fig.~\ref{Paramater lambda}. Fig.~\ref{plot of denselabel and single label} compares the segmentation accuracy on MNIST and MS lesion for a range of average dice where labels are generated by a group of 5 simulated annotators. Fig.~\ref{CMs of MNIST and MS} illustrates our model can capture the patterns of mistakes for each annotator. We also notice that our model is consistently more accurate than the global CM model, indicating the value of image-dependent pixel-wise CMs.

\begin{table}[!h]
	\center
	\scriptsize
	\begin{tabular}{@{}lllllllll}
		\hline
		 & MNIST & MNIST  & MSLesion  & MSLesion  \\
		Models & DICE (\%) & CM estimation & DICE (\%) & CM estimation \\
		\hline	
		Naive CNN on mean labels & 38.36 $\pm$ 0.41 &  n/a & 46.55 $\pm$ 0.53 &  n/a  \\
		Naive CNN on mode labels & 62.89 $\pm$ 0.63 &  n/a & 47.82 $\pm$ 0.76 &  n/a  \\
		Probabilistic U-net \cite{kohl2018probabilistic}  & 65.12 $\pm$ 0.83  &  n/a  & 46.15 $\pm$ 0.59  & n/a    \\
		Separate CNNs on annotators & 70.44 $\pm$ 0.65 & n/a & 46.84 $\pm$ 1.24 & n/a &   \\ 
		STAPLE \cite{warfield2004simultaneous}& 78.03 $\pm$ 0.29 &  0.1241 $\pm$ 0.0011 & 55.05 $\pm$ 0.53 &  0.1502 $\pm$ 0.0026 \\ 
		Spatial STAPLE \cite{asman2012formulating} & 78.96 $\pm$ 0.22 &  0.1195 $\pm$ 0.0013 & 58.37 $\pm$ 0.47 &  0.1483 $\pm$ 0.0031  \\
		Ours with Global CMs & 79.21 $\pm$ 0.41  & 0.1132 $\pm$ 0.0028 & 61.58 $\pm$ 0.59   &  0.1449 $\pm$ 0.0051   \\
		Ours without Trace & 79.63 $\pm$ 0.53  &  0.1125 $\pm$ 0.0037 & 65.77 $\pm$ 0.62 &  0.1342 $\pm$ 0.0053  \\
		Ours & 82.92 $\pm$ 0.19 &  0.0893 $\pm$ 0.0009 & 67.55 $\pm$ 0.31 &  0.0811 $\pm$ 0.0024    \\
		Oracle (Ours but with known CMs)   & 83.29 $\pm$ 0.11 & 0.0238 $\pm$ 0.0005 & 78.86 $\pm$ 0.14 &  0.0415 $\pm$ 0.0017   \\ 
		\hline
	\end{tabular}%
\caption{\footnotesize Comparison of segmentation accuracy (DICE) and quality of confusion matrix (CM) estimation (MSE) for different methods with dense labels (mean $\pm$ standard deviation).}
\label{denselabel of MNIST and MS}
\end{table}
\vspace{-4mm}
\begin{table}[!h]
	\center
	\scriptsize
	\begin{tabular}{@{}llllllllll}
		\hline
		 & MNIST & MNIST  & MSLesion  & MSLesion  \\
		Models & DICE (\%) & CM estimation & DICE (\%) & CM estimation \\
		  
		\hline	
		Naive CNN & 32.79 $\pm$ 1.13 &  n/a & 27.41 $\pm$ 1.45 &  n/a  \\
		STAPLE \cite{warfield2004simultaneous}& 54.07 $\pm$ 0.68 &  0.2617 $\pm$ 0.0064& 35.74 $\pm$ 0.84 &  0.2833 $\pm$ 0.0081  \\ 
		Spatial STAPLE \cite{asman2012formulating} & 56.73 $\pm$ 0.53 &  0.2384 $\pm$ 0.0061& 38.21 $\pm$ 0.71 &  0.2591 $\pm$ 0.0074  \\
 		Ours with Global CMs  & 59.01 $\pm$ 0.65   & 0.1953 $\pm$ 0.0041   & 40.32 $\pm$ 0.68  & 0.1974 $\pm$ 0.0063    \\
		Ours without Trace & 74.48 $\pm$ 0.37  &  0.1538 $\pm$ 0.0029 & 54.76 $\pm$ 0.66 & 0.1745 $\pm$ 0.0044  \\
		Ours & 76.48 $\pm$ 0.25  &  0.1329 $\pm$ 0.0012 & 56.43 $\pm$ 0.47 &  0.1542 $\pm$ 0.0023  \\
		\hline
	\end{tabular}%
\caption{\small Comparison of segmentation accuracy (DICE) and error of CM estimation (MSE) for different methods with one label per image (mean $\pm$ standard deviation).  We note that `Naive CNN' is a baseline trained by simply minimising the cross-entropy between the predictions and the noisy labels. }
\label{singlelabel of MNIST and MS}
\end{table}
\vspace{-4mm}
\begin{table}[h]
	\center
	\scriptsize
	\begin{tabular}{@{}llllllllll}
		\hline
		 Models & MNIST & MS  & BraTS  & LIDC-IDRI  \\
		\hline	
		Probabilistic U-net \cite{kohl2018probabilistic}  & 1.46 $\pm$ 0.04 & 1.91 $\pm$ 0.03  & 3.23 $\pm$ 0.07  &  1.97 $\pm$ 0.03  \\
		Ours & \textbf{1.24 $\pm$ 0.02} & \textbf{1.67 $\pm$ 0.03}  & \textbf{3.14 $\pm$ 0.05}  &  \textbf{1.87 $\pm$ 0.04}  \\
		\hline
	\end{tabular}%
\caption{Comparison of Generalised Energy Distance (GED) on different datasets (mean $\pm$ standard deviation). The distance metric used here is the DICE score.}
\label{ged_result}
\end{table}
\vspace{-4mm}

\begin{figure}[t]
   \subfigure{
   \addtocounter{figure}{-1}
    \begin{minipage}[t]{0.45\linewidth}
        \includegraphics[scale=0.28]{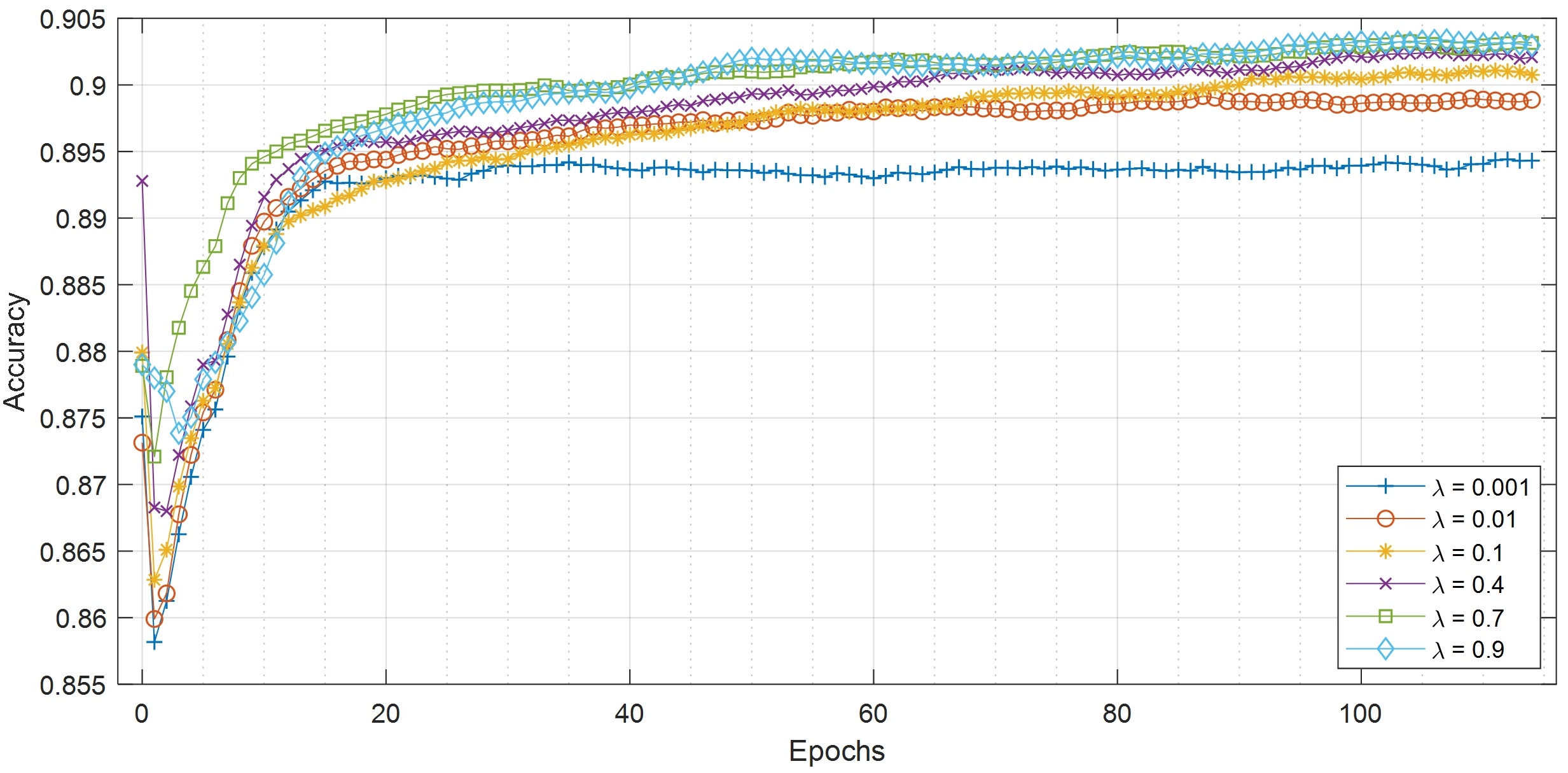}
        \caption{\footnotesize  Curves of validation accuracy during training of our model for a range of hyperparameters. For our method, the scaling of trace regularizer is varied in [0.001, 0.01, 0.1, 0.4, 0.7, 0.9].)}
        \label{Paramater lambda}
    \end{minipage}%
    
    }
   \hspace{1mm}
   \subfigure{
    \begin{minipage}[t]{0.54\linewidth}
        \center
        \includegraphics[scale=0.25]{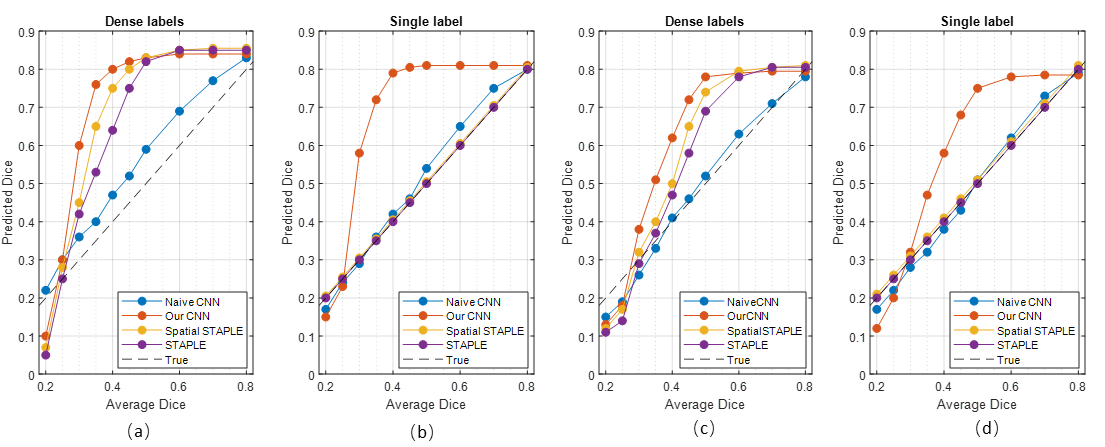}
        \caption{\footnotesize Segmentation accuracy of different models on MNIST (a, b) and MS (c, d) dataset for a range of annotation noise levels (measured in average DICE score with respect to the GT labels.}
        \label{plot of denselabel and single label}
    \end{minipage}
    }
\end{figure}

\vspace{-7mm}
\subsection{BraTS Dataset and LIDC-IDRI Dataset}
\vspace{-3mm}


We also evaluate our model on a multi-class segmentation task, using all of the 259 high grade glioma (HGG) cases in training data from 2019 multi-modal Brain Tumour Segmentation Challenge (BraTS). We extract each slice as 2D images and split them at case-wise to have, 1600 images for training, 300 for validation and 500 for testing. Pre-processing includes: concatenation of all of available modalities; centre cropping to 192 x 192; normalisation for each case at each modality. To create synthetic noisy labels in multi-class scenario, we first choose a target class and then apply morphological operations on the provided GT mask to create 4 synthetic noisy labels at different patterns, namely, over-segmentation, under-segmentation, wrong segmentation and good segmentation. The details of noisy label simulation are in Appendix \ref{Appendix Brats and LIDC-IDRI}.


\begin{figure}[t]
    \centering
    \vspace{-4mm}
    \includegraphics[scale=0.045]{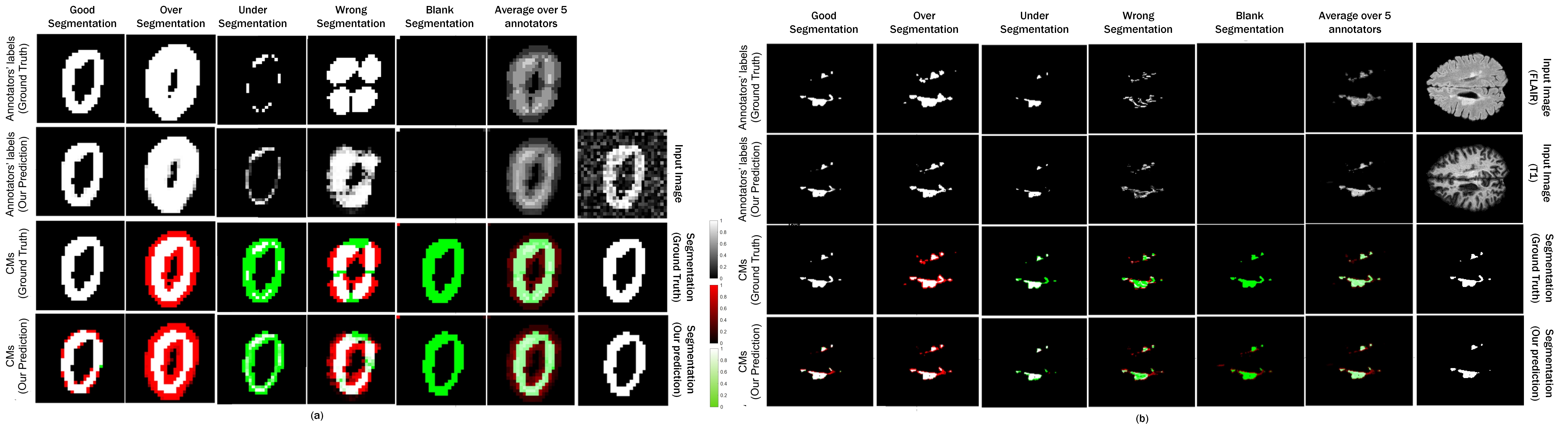}
    \addtocounter{figure}{+1}
    \caption{\footnotesize Visualisation of the estimated true labels and the estimated pixel-wise confusion matrices on MNIST/MS datasets along with their targets (best viewed in colour). White is the true positive, green is the false negative, red is the false positive and black is the true negative.}
    \label{CMs of MNIST and MS}
\end{figure}

\begin{figure}[t]
   \vspace{-2mm}
   \addtocounter{figure}{-1}
   \subfigure{
    \begin{minipage}[t]{0.5\linewidth}
        \center
        \includegraphics[scale=0.06]{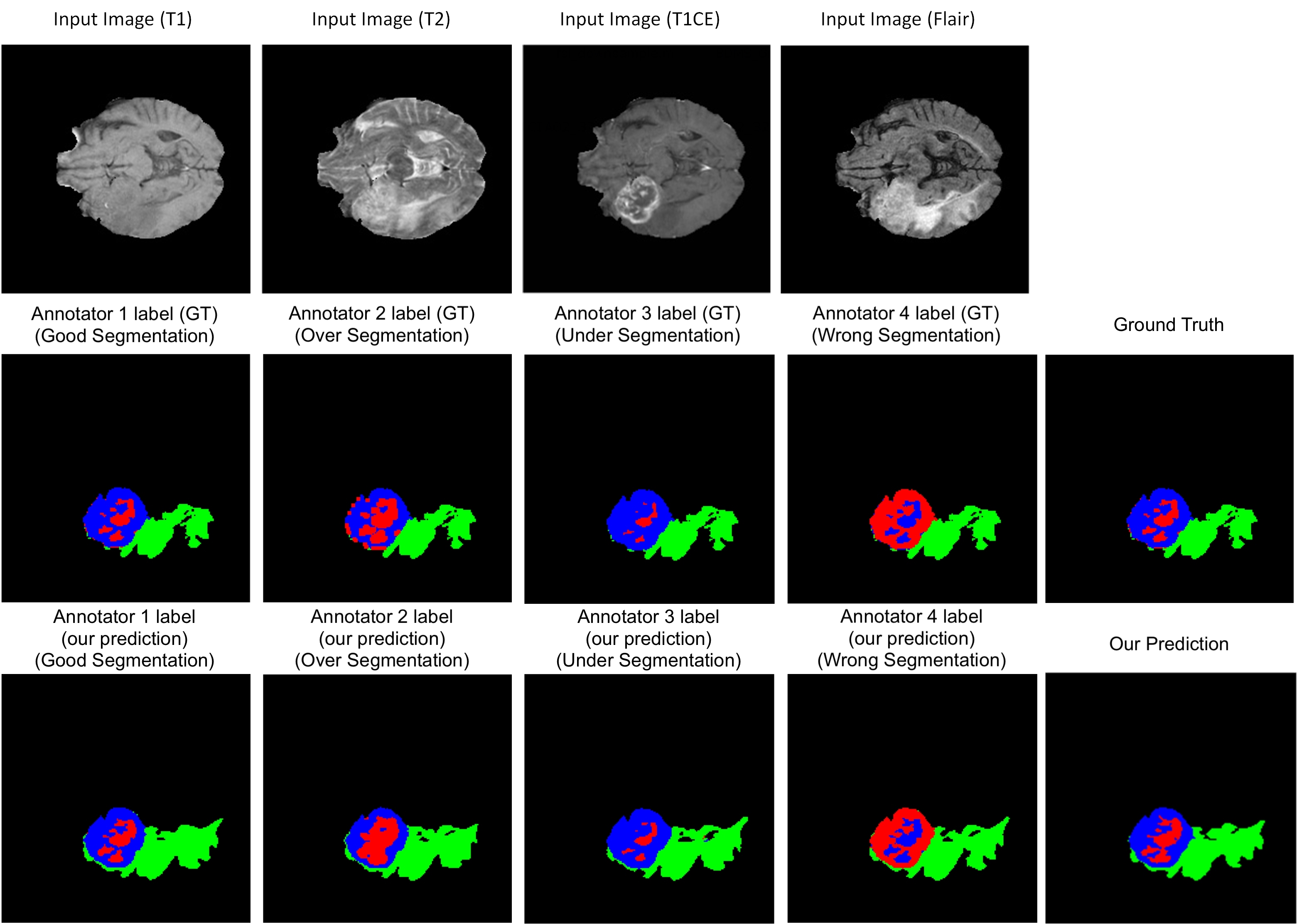}
        \caption{\footnotesize Visualisation of the estimated true segmentation on the BraTS dataset and the estimated annotations of their respective annotators (best viewed in colour). The tumour core (red) is the target class on which annotation mistakes are simulated.}
        \label{Brats results}
    \end{minipage}%
    }
    \hspace{1mm}
   \subfigure{
    \begin{minipage}[t]{0.5\linewidth}
        \center
        \includegraphics[scale=0.09]{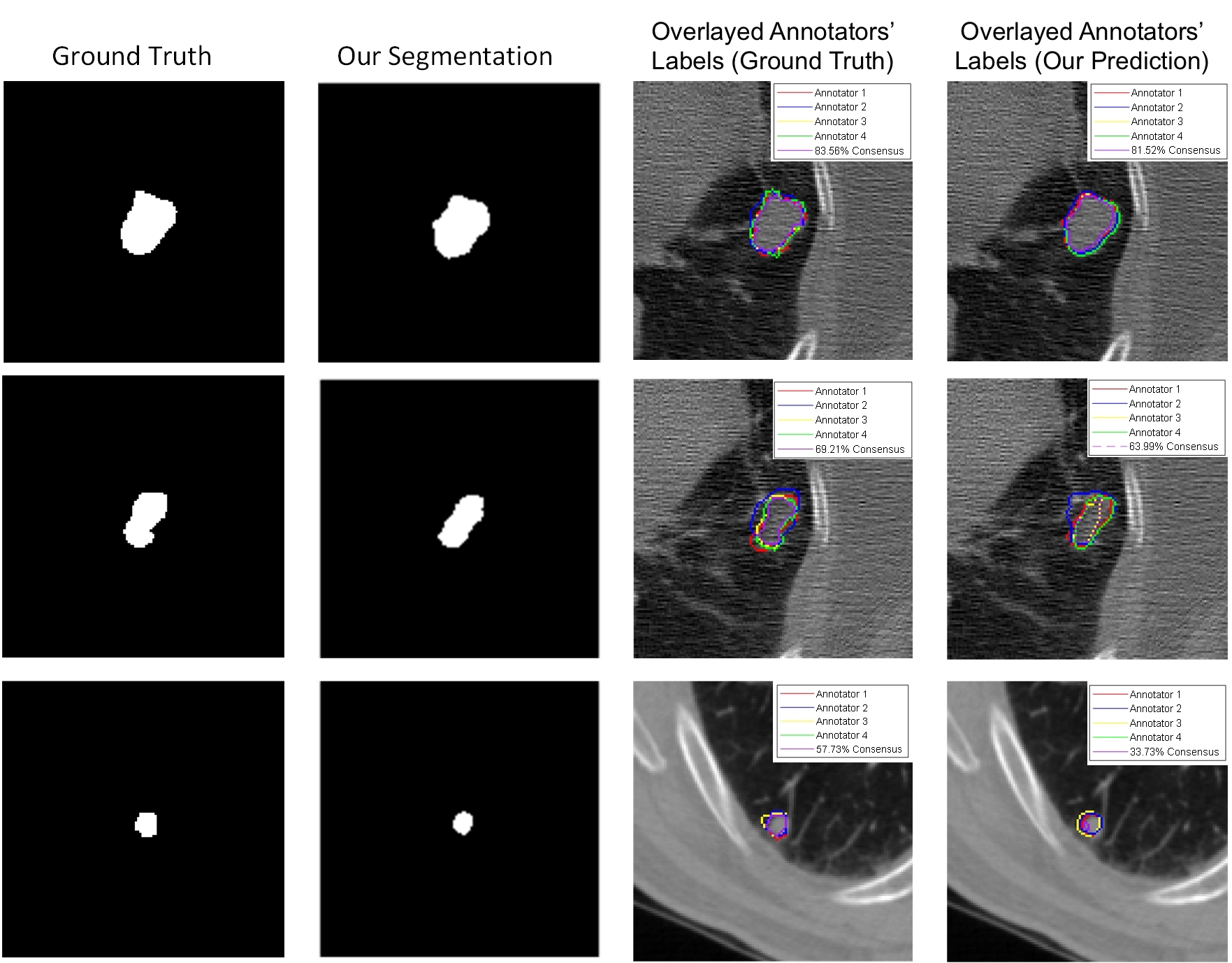}
        \caption{\footnotesize Segmentation results on LIDC-IDRI dataset and the visualization of each annotator contours and the consensus. The bottom row shows an interesting example in which annotator 4 (green) misses the abnormality completely, which is also predicted by our model.}
        \label{LIDCresults}
    \end{minipage}
    }
    \vspace{-4mm}
\end{figure}

Lastly, we use the LIDC-IDRI dataset to evaluate our method in the scenario where multiple labels are acquired from different clinical experts. The dataset contains 1018 lung CT scans from 1010 lung patients with manual lesion segmentations from four experts. 
For each scan, 4 radiologists provided annotation masks for lesions that they independently detected and considered to be abnormal. For our experiments, we used the same method in \cite{kohl2018probabilistic} to pre-process all scans. We split the dataset at case-wise into a training (722 patients), validation (144 patients) and testing (144 patients). We then resampled the CT scans to $1 mm \times 1 mm$ in-plane resolution. We also centre cropped 2D images ($180 \times 180$ pixels) around lesion positions, in order to focus on the annotated lesions. The lesion positions are those where at least one of the experts segmented a lesion. We hold 5000 images in the training set, 1000 images in the validation set and 1000 images in the test set. Since the dataset does not provide a single curated ground-truth for each image, we created a ``gold standard'' by aggregating the labels via STAPLE \cite{asman2012formulating}, a recent variant of the STAPLE framework employed in the creation of public medical image segmentation datasets e.g., ISLES \cite{winzeck2018isles}, MSSeg \cite{commowick2018objective}, Gleason'19 \cite{gleason2019} datasets. We further note that, as before, we assume labels are only available to the model during training, but not at test time, thus label aggregation methods cannot be applied on the test examples. 

On both BraTS and LIDC-IDRI datasets, our proposed model achieves a higher dice similarity coefficient than STAPLE and Spatial STAPLE on both of the dense labels and single label scenarios (shown in Table. \ref{denselabebrats} and Table. \ref{singlelabebrats} in Appendix \ref{Appendix Brats and LIDC-IDRI}). In addition, our model (with trace) outperforms STAPLE in terms of CM estimation by a large margin at $14.4\%$ on BraTS. In Fig. \ref{Brats results}, we visualized the segmentation results on BraTS and the corresponding annotators' predictions. Fig.~\ref{LIDCresults} presents three examples of the segmentation results and the corresponding four annotator contours, as well as the consensus. As shown in both figures, our model successfully predicts both the segmentation of lesions and the variations of each annotator in different cases. We also measure the inter-reader consensus levels by computing the IoU of multiple annotations, and compare the segmentation performance in three subgroups of different consensus levels (low, medium and high). Results are shown in Fig.~\ref{consensus} and Fig.~\ref{consensus dice} in Appendix \ref{Appendix Brats and LIDC-IDRI}.

Additionally, as shown in Table.\ref{ged_result}, our model consistently outperforms Probabilistic U-Net on generalized energy distance across the four test different datasets, indicating our method can better capture the inter-annotator variations than the baseline Probabilistic U-Net. This result shows that the information about which labels are acquired from whom is useful in modelling the variability in the observed segmentation labels. 

\vspace{-5mm}
\section{Discussion and Conclusion}
\vspace{-3mm}
We introduced the first supervised segmentation method for jointly estimating the spatial characteristics of labelling errors from multiple human annotators and the ground-truth label distribution. We demonstrated this method on real-world datasets with both synthetic and real-world annotations. Our method is capable of estimating individual annotators and thereby improving robustness against label noise. Experiments have shown our model achieves considerable improvement over the traditional label fusion approaches including averaging, the majority vote and the widely used STAPLE framework and its recent extensions, in terms of both segmentation accuracy and the quality of confusion matrix (CM) estimation.

In the future, we aim to extend this work both in theory and applications. Here we made a simplifying assumption that there is a single, unknown, true segmentation map of the underlying anatomy, and each individual annotator produces a noisy approximation to it with variations that reflect their individual characteristics. This is in stark contrast with many recent advances (e.g., Probabilistic U-net \cite{kohl2018probabilistic} and PHiSeg \cite{baumgartner2019phiseg}) that assume variable annotations from experts are all realistic instances of the true segmentation. One could argue that single-truth assumption may be sensible in the context of segmentation problems since there exists only one true boundary of the physical objects captured in an image while multiple hypothesis can arise from ambiguities in human interpretations. However, we believe that the reality lies somewhere between i.e., some variations are indeed intrinsic while some are specific to human imperfections. Separation of the two could be potentially addressed by using some prior knowledge about the individual annotators (e.g., meta-information such as the years of experiences, etc) \cite{raykar2009supervised} or using a small portion of dataset with curated annotations as a reference set which can be assumed to come from the true label distribution. 


Another exciting avenue of research is the application of the annotation models in downstream tasks. Of particular interest is the design of active data collection schemes where the segmentation model is used to select which samples to annotate (``active learning''), and the annotator models are used to decide which experts to label them (``active labelling'')---e.g., extending Yan \etal \cite{yan2011active} from simple classification task to segmentation remains future work. Another exciting application is education of inexperienced annotators; the estimated spatial characteristics of segmentation mistakes provide further insights into their annotation behaviours, and as a result, potential help them improve the quality of their annotations in the next data acquisition.

\section*{Acknowledgement} We would like to thank Swami Sankaranarayanan and Ardavan Saeedi at Butterfly Network for their feedback and initial discussions. Mou-Cheng is supported by GSK funding (BIDS3000034123) via UCL EPSRC CDT in i4health and UCL Engineering Dean's Prize. We are also grateful for EPSRC grants EP/R006032/1, EP/M020533/1, CRUK/EPSRC grant NS/A000069/1, and the NIHR UCLH Biomedical Research Centre, which support this work.

\section*{Boarder Impact Statement}

\textit{Image segmentation} has been one of the main challenges in modern medical image analysis, and describes the process of assigning each pixel or voxel in images with biologically meaningful discrete labels, such as anatomical structures and tissue types (e.g. pathology and healthy tissues). The task is required in many clinical and research applications, including surgical planning \cite{gering2001integrated,mazzara2004brain}, and the study of disease progression, aging or healthy development \cite{fischl2002whole,prastawa2005automatic,zijdenbos2002automatic}. However, there are many cases in practice where the correct delineation of structures is challenging; this is also reflected in the well-known presence of high inter- and intra-reader variability in segmentation labels obtained from trained experts \cite{warfield2004simultaneous,joskowicz2018automatic,joskowicz2019inter}. 

Although expert manual annotations of lesions is feasible in practice, this task is time consuming. It usually takes 1.5 to 2 hours to label a MS patient with average 3 visit scans. Meanwhile, the long-established gold standard for segmentation of medical images has been manually voxel-by-voxel labeled by an expert anatomist. Unfortunately, this process is fraught with both interand intra-rater variability (e.g., on the order of approximately 10\% by volume \cite{ashton2003accuracy,joe1999brain}). Thus, developing an automatic segmentation technique to fix the variability among inter- and intra-readers could be meaningful not only in terms of the accuracy in delineating MS lesions but also in the related reductions in time and economic costs derived from manual lesion labeling. The lack of consistency in labelling is also common to see in other medical imaging applications, e.g., in lung abnormalities segmentation from CT images. A lesion might be clearly visible by one annotator, but the information about whether it is cancer tissue or not might not be clear to others. While our work in the current form has only been demonstrated on medical images, we would like to stress that the medical imaging domain offers a considerably broad range of opportunities for impact; e.g., diagnosis/prognosis in radiology, surgical planning and study of disease progression and treatment, etc. In addition, the annotator information could be potentially utilised for the purpose of education. Another potential opportunity is to integrate such information into the data/label acquisition scheme in order to train reliable segmentation algorithms in a data-efficient manner.



\bibliographystyle{unsrtnat}
\bibliography{main}

\newpage
\appendix

\begin{center}
\LARGE{\textbf{Supplementary Material: \\
Disentangling Human Error from Ground Truth in Segmentation of Medical Images}}
\end{center}

\section{Additional results}
\subsection{Annotation Simulation Details}\label{data simulation}
We generate synthetic annotations from an assumed GT on MNIST, MS lesion and BraTS datasets, to generate efficacy of the approach in an idealised situation where the GT is known. We simulate a group of 5 annotators of disparate characteristics by performing morphological transformations (e.g., thinning, thickening, fractures, etc) on the ground-truth (GT) segmentation labels, using Morpho-MNIST software \cite{castro2019morphomnist}. In particular, the first annotator provides faithful segmentation (``good-segmentation'') with approximate GT, the second tends over-segment (``over-segmentation''), the third tends to under-segment (``under-segmentation''), the fourth is prone to the combination of small fractures and over-segmentation (``wrong-segmentation'') and the fifth always annotates everything as the background (``blank-segmentation''). To create synthetic noisy labels in multi-class scenario, we first choose a target class and then apply morphological operations on the provided GT mask to create 4 synthetic noisy labels at different patterns, namely, over-segmentation, under-segmentation, wrong segmentation and good segmentation. We create training data by deriving labels from the simulated annotators. We also experimented with varying the levels of morphological operations on MNIST and MS lesion datasets, to test the robustness of our methods to varying degrees of annotation noise.

\subsection{Additional Qualitative Results on MNIST and MS Dataset}\label{Appendix MNIST and MS}

Here we provide additional qualitative comparison of segmentation results and CM visualization results on MNIST and MS datasets. We examine the ability of our method to learn the CMs of annotators and the true label distribution on single label per image. Fig.~\ref{MNIST segmentation results for single label} and Fig.~\ref{MS segmentation results for sinle label} show the segmentation results on MNIST dataset on single label per image. Our model achieved a higher dice similarity coefficient than STAPLE and Spatial STAPLE, even prominently, our model outperformed STAPLE and Spatial STAPLE without or with trace norm, in terms of CM estimation. Fig.~\ref{CMs of MNIST - single label} and Fig.~\ref{CMs of MS - single label} illustrate our model on single label still can capture the patterns of mistakes.


\begin{figure}[H]
    \centering
    \includegraphics[scale=0.4]{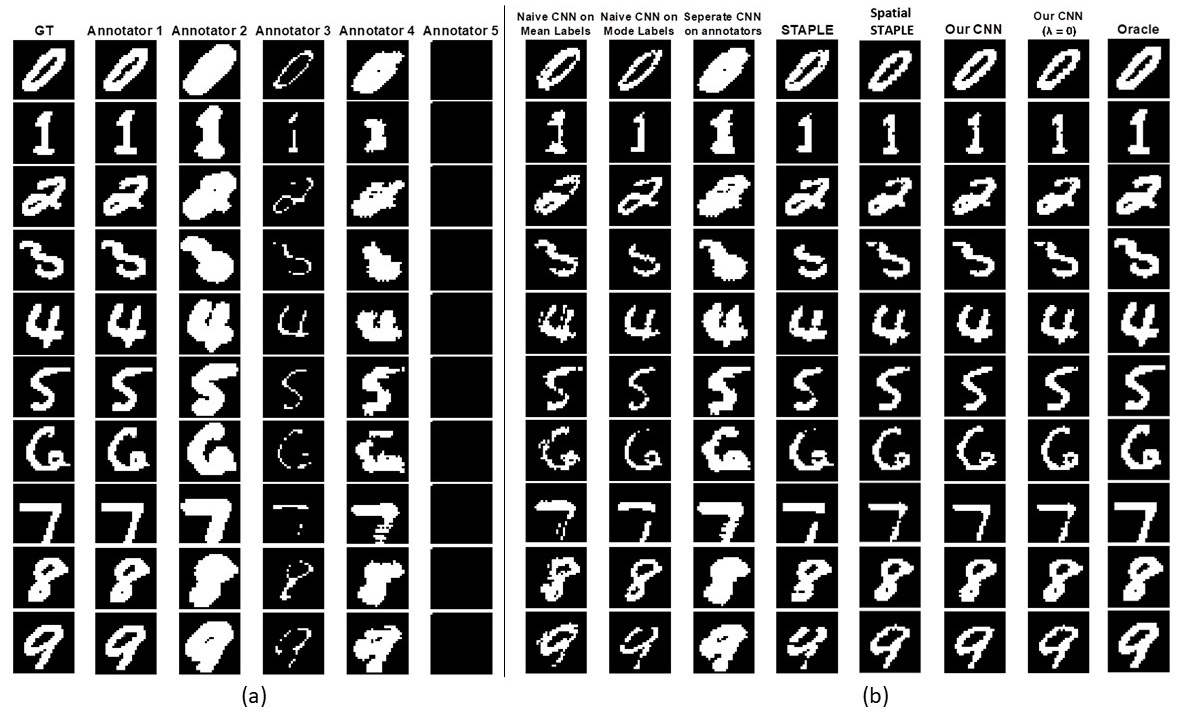}

    \caption{\footnotesize Visualisation of segmentation labels on MNIST dataset for single label per image: (a) GT and simulated annotator's segmentations (Annotator 1 - 5); (b) the predictions from the supervised models.}
    \label{MNIST segmentation results for single label}
\end{figure}

\begin{figure}[h]
    \centering
    \includegraphics[scale=0.32]{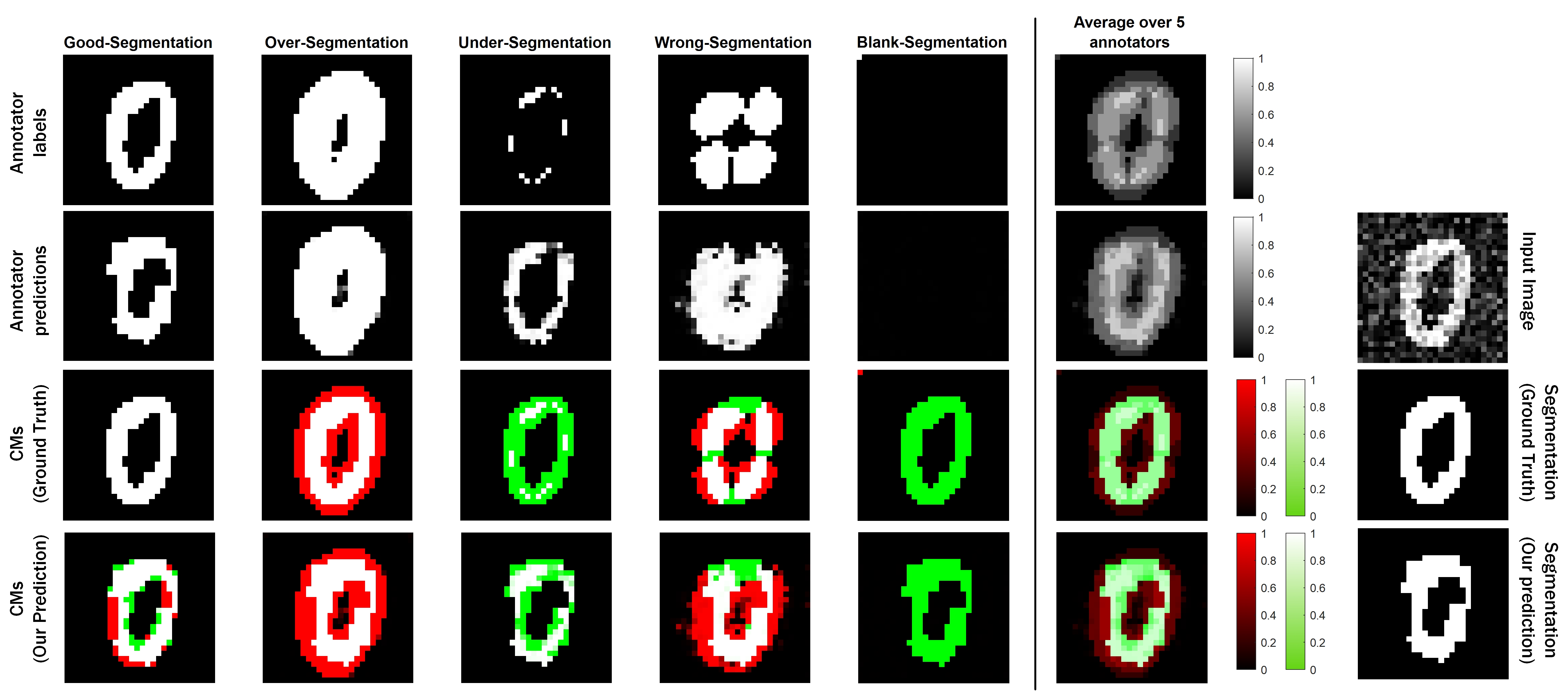}
    \caption{\footnotesize Visualisation of estimated true labels and confusion matrices for single label per image on MNIST datasets (Best viewed in colour: white is the true positive, green is the false negative, red is the false positive and black is the true negative)}.
    \label{CMs of MNIST - single label}
\end{figure}

\begin{figure}[h]
    \centering
    \includegraphics[scale=0.49]{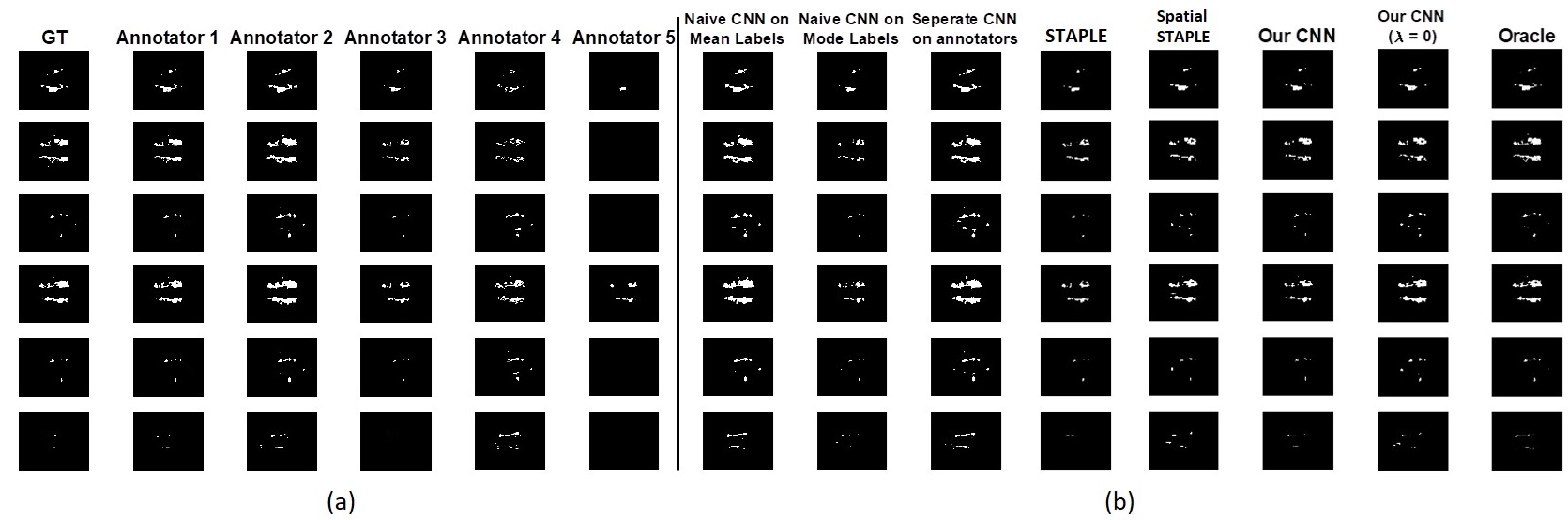}
    \caption{\footnotesize Visualisation of segmentation labels on MS lesion dataset for single label per image: (a) GT and simulated annotator's segmentations (Annotator 1 - 5); (b) the predictions from the supervised models.}
    \label{MS segmentation results for sinle label}
\end{figure}

\begin{figure}[!h]
    \centering
    \includegraphics[scale=0.24]{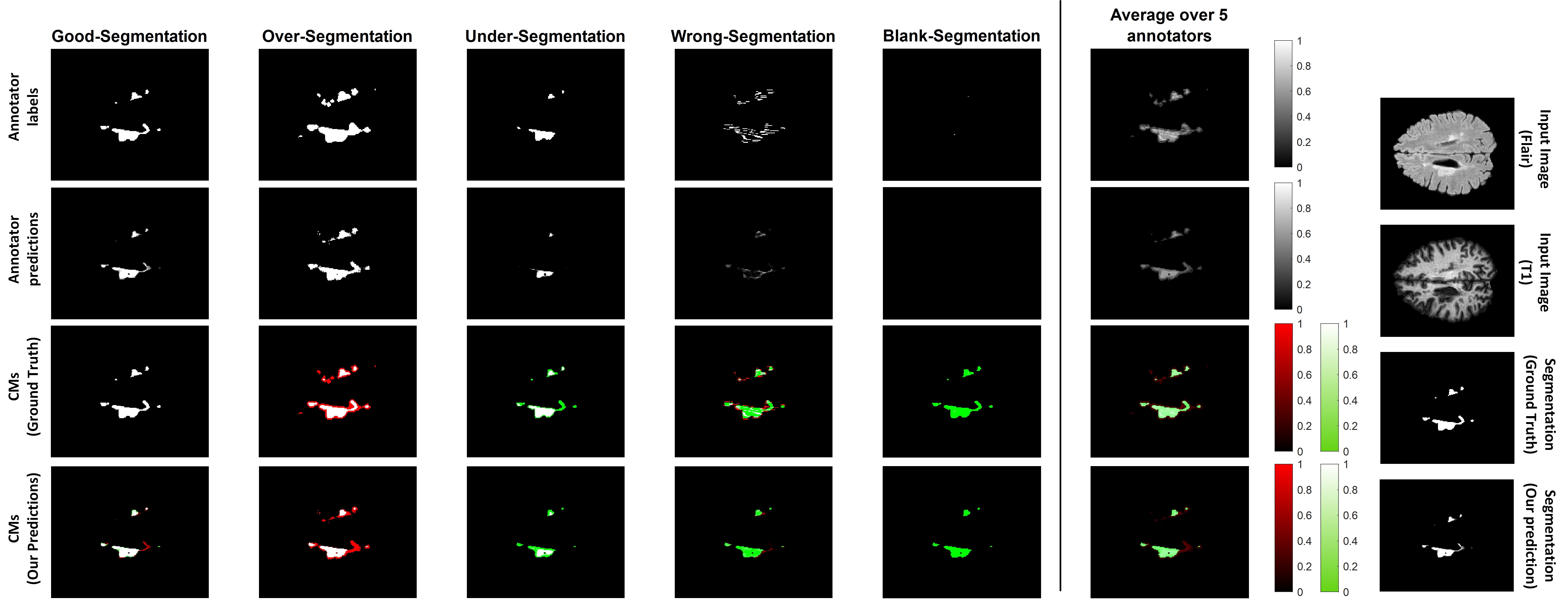}
    \caption{\footnotesize Visualisation of estimated true labels and confusion matrices for single label per image on MS lesion datasets (Best viewed in colour: white is the true positive, green is the false negative, red is the false positive and black is the true negative). }
    \label{CMs of MS - single label}
\end{figure}

\clearpage
\subsection{Quantitative and Extra Qualitative Results on BraTS and LIDC-IDRI}\label{Appendix Brats and LIDC-IDRI}
\vspace{-2mm}




Here we provide the quantitative comparison of our method and other baselines on BraTS and LIDC-IDRI datasets, which have been precluded from the main text due to the space limit (see Table.~\ref{denselabebrats} and Table. \ref{singlelabebrats}). We also provide additional qualitative examples (see Fig.~\ref{Brats results2},\ref{Brats results segmentation2}, \ref{LIDC segmentation}) on both datasets. Lastly, we compare the segmentation performance on 3 different subgroups of LIDC-IDRI with varying levels of inter-reader variability; Fig.~\ref{consensus dice} illustrates our method attains consistent improvement over the baselines in all cases, indicating its ability to segment more robustly even the hard examples where the experts in reality have disagreed to a large extent.

BraTS 2019 is a multi-class segmentation dataset, containing 259 cases with high grade (HG) and 76 cases with low grade (LG) glioma (a type of brain tumour). For each case, four MRI modalities are available, FLAIR, T1, T1-contrast and T2. The datasets are pre-processed by the organizers and co-registered to the same anatomical template, interpolated to the same resolution (1 $mm^3$) and skull-stripped. We used only high grade cases and centre cropped 2D images (192 $\times$ 192 pixels) and hold 1600 2D images for training, 300 images for validation, 500 images for testing, we apply Gaussian normalization on each case of each modality, to have zero-mean and unit variance. Fig. \ref{Brats results2} shows another tumor case in four different modality with different target label. We also present several example results on different methods in Fig.~\ref{Brats results segmentation2}. 

To demonstrate the performance on a dataset with real-world annotations, we have also evaluated our model on LIDC-IDRI. The ``ground truth'' labels in the experiments are generated by aggregating the multiple labels via Spatial STAPLE\cite{asman2012formulating} as used in the curation of existing public datasets e.g., ISLES \cite{winzeck2018isles}, MSSeg \cite{commowick2018objective}, Gleason'19 \cite{gleason2019}. Fig.~\ref{LIDC segmentation} presents several examples of segmentation results from different methods. We also measure the inter-reader consensus level by computing the IoU of annotations, and compare in Fig.~\ref{consensus} the estimates from our model against the values meansured on the real annotations. Furthermore, we divide the test dataset into low consensus (30\% to 65\%), middle consensus (65\% to 75\%) and high consensus (75\% to 90\%) subgroups and compare the performance in Fig.~\ref{consensus dice}. Our method shows competitive ability to segment the challenging examples with low consensus values. Here we note that the consensus values in our test data range from 30\% to 90\%,, and compared the dice coefficient of our model with baselines. 


On both BraTS and LIDC-IDRI dataset, our proposed model consistenly achieves a higher dice similarity coefficient than STAPLE on both of the dense labels and single label scenarios (shown in Table.~\ref{denselabebrats} and Table. \ref{singlelabebrats}). In addition, our model (with trace) outperforms STAPLE in terms of CM estimation by a large margin at $14.4\%$ on BraTS. In Fig.~\ref{Brats results2}, we visualized the segmentation results on BraTS and the corresponding annotators' predictions. Fig.~\ref{Brats results segmentation2} presents four examples of the segmentation results and the corresponding annotators' predictions, as well as the baseline methods. As shown in both figures, our model successfully predicts the both the segmentation of lesions and the variations of each annotator in different cases. 

\vspace{-2mm}
\begin{table}[H]
	\center
	\scriptsize
	\begin{tabular}{@{}lllllllll}
		\hline
		 & BraTS & BraTS  & LIDC-IDRI  & LIDC-IDRI  \\
		Models & DICE (\%) & CM estimation & DICE (\%) & CM estimation \\
		  
		\hline	
		Naive CNN on mean labels & 29.42 $\pm$ 0.58  &  n/a & 56.72 $\pm$ 0.61  &  n/a  \\
		Naive CNN on mode labels & 34.12 $\pm$ 0.45  &  n/a & 58.64 $\pm$ 0.47  &  n/a  \\
		Probabilistic U-net \cite{kohl2018probabilistic}  & 40.53 $\pm$ 0.75   &  n/a  & 61.26 $\pm$ 0.69  &  n/a   \\
		STAPLE \cite{warfield2004simultaneous}& 46.73 $\pm$ 0.17  & 0.2147 $\pm$ 0.0103   & 69.34 $\pm$ 0.58  & 0.0832 $\pm$ 0.0043   \\ 
		Spatial STAPLE \cite{asman2012formulating} & 47.31 $\pm$ 0.21  & 0.1871 $\pm$ 0.0094   & 70.92 $\pm$ 0.18  &  0.0746 $\pm$ 0.0057   \\
		Ours with Global CMs &  47.33 $\pm $ 0.28  &  0.1673 $\pm $ 0.1021    & 70.94 $\pm $ 0.19  & 0.1386 $\pm $ 0.0052   \\
		Ours without Trace & 49.03 $\pm$ 0.34   & 0.1569 $\pm$ 0.0072   & 71.25 $\pm$ 0.12  & 0.0482 $\pm$ 0.0038    \\
		Ours & \textbf{53.47 $\pm $ 0.24}  & \textbf{0.1185 $\pm$ 0.0056 }  & \textbf{74.12 $\pm $ 0.19 } &  \textbf{0.0451 $\pm$ 0.0025     }\\
		Oracle (Ours but with known CMs)   & 67.13 $\pm$ 0.14  & 0.0843 $\pm$ 0.0029  & 79.41 $\pm$ 0.17  & 0.0381 $\pm$ 0.0021    \\ 
		\hline
	\end{tabular}%
	\vspace{1mm}
    \caption{\footnotesize Comparison of segmentation accuracy and error of CM estimation for different methods trained with \textbf{dense labels} (mean $\pm$ standard deviation). The best results are shown in bald. Note that we count out the Oracle from the model ranking as it forms a theoretical upper-bound on the performance where true labels are known on the training data.}
\label{denselabebrats}
\end{table}
\vspace{-5mm}
\begin{table}[H]
	\center
	\scriptsize
	\begin{tabular}{@{}llllllllll}
		\hline
		 & BraTS & BraTS  & LIDC-IDRI  & LIDC-IDRI  \\
		Models & DICE (\%) & CM estimation & DICE (\%) & CM estimation \\
		  
		\hline	
		Naive CNN on mean \& mode labels& 36.12 $\pm$ 0.93  &  n/a & 48.36 $\pm$ 0.79   &  n/a  \\
		STAPLE \cite{warfield2004simultaneous}& 38.74 $\pm$ 0.85  & 0.2956 $\pm$ 0.1047  & 57.32 $\pm$ 0.87  & 0.1715 $\pm$ 0.0134     \\ 
		Spatial STAPLE \cite{asman2012formulating} & 41.59 $\pm$ 0.74  & 0.2543 $\pm$ 0.0867  & 62.35 $\pm$ 0.64  & 0.1419 $\pm$ 0.0207    \\
 		Ours with Global CMs  & 41.76 $\pm$ 0.71  & 0.2419 $\pm$ 0.0829   & 63.25 $\pm$ 0.66  & 0.1382 $\pm$ 0.0175    \\
		Ours without Trace & 43.74 $\pm$ 0.49   & 0.1825 $\pm$ 0.0724   & 66.95 $\pm$ 0.51  & 0.0921 $\pm$ 0.0167   \\
		Ours & \textbf{46.21 $\pm$ 0.28}   & \textbf{0.1576 $\pm$ 0.0487 }  & \textbf{68.12 $\pm$ 0.48}  & \textbf{0.0587 $\pm$ 0.0098   } \\
		\hline
	\end{tabular}%
		\vspace{1mm}
    \caption{\footnotesize Comparison of segmentation accuracy and error of CM estimation for different methods trained with only one label available per image (mean $\pm$ standard deviation). The best results are shown in bald.}
    \label{singlelabebrats}
\end{table}

\begin{figure}[h]
    \vspace{-2mm}
    \centering
    \includegraphics[scale=0.4]{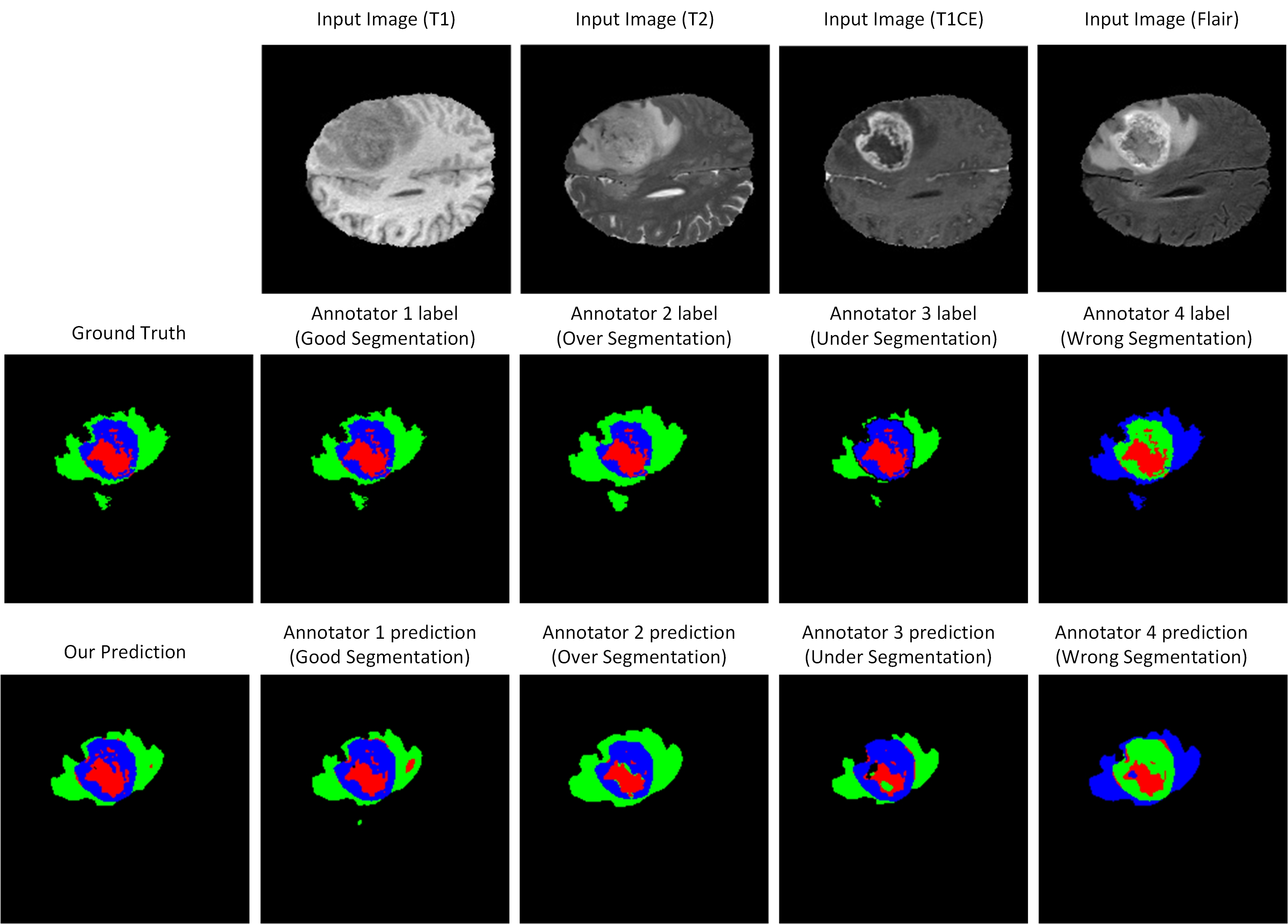}
    \caption{The final segmentation of our model on BraTS and each annotator network predictions visualization. (Best viewed in colour: the target label is green.)}
    \label{Brats results2}
    \vspace{-2mm}
\end{figure}

\begin{figure}[h]
    \centering
    \includegraphics[scale=0.13]{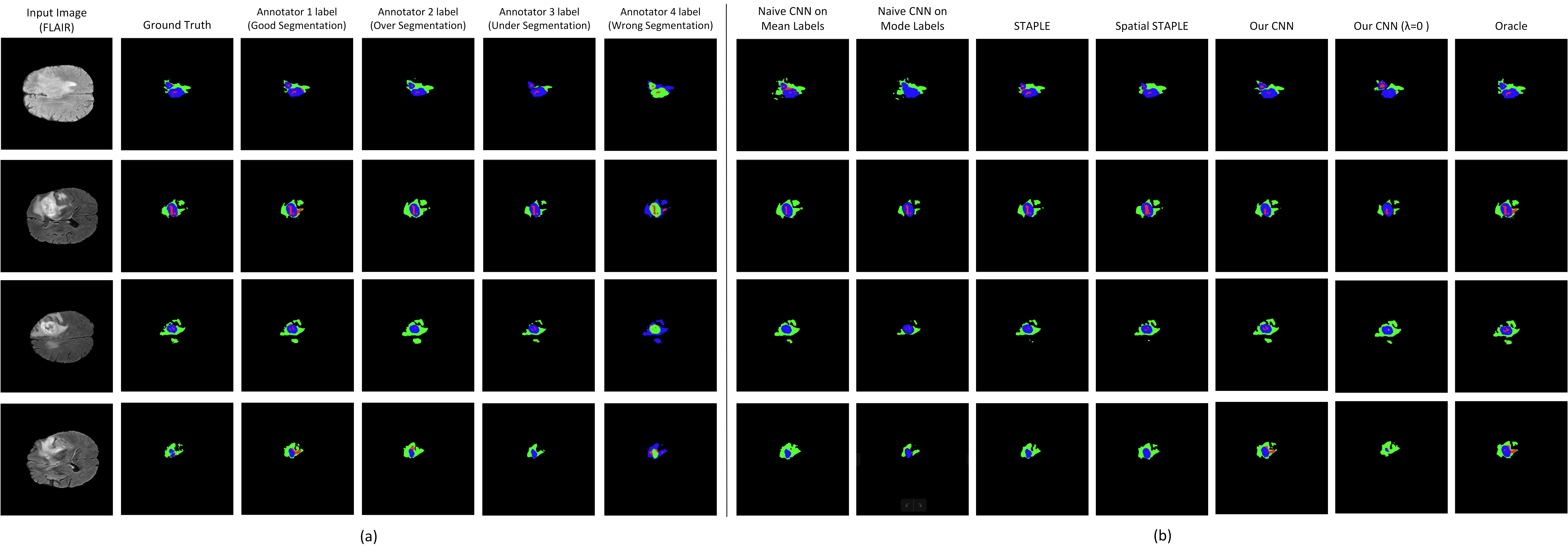}
    \caption{Visualisation of segmentation labels on BraTS dataset: (a) GT and simulated annotator's segmentations (Annotator 1 - 5); (b) the predictions from the supervised models.) }
    \label{Brats results segmentation2}
\end{figure}

\vspace{-2mm}
\begin{figure}[!h]
    \centering
    \includegraphics[scale=0.165]{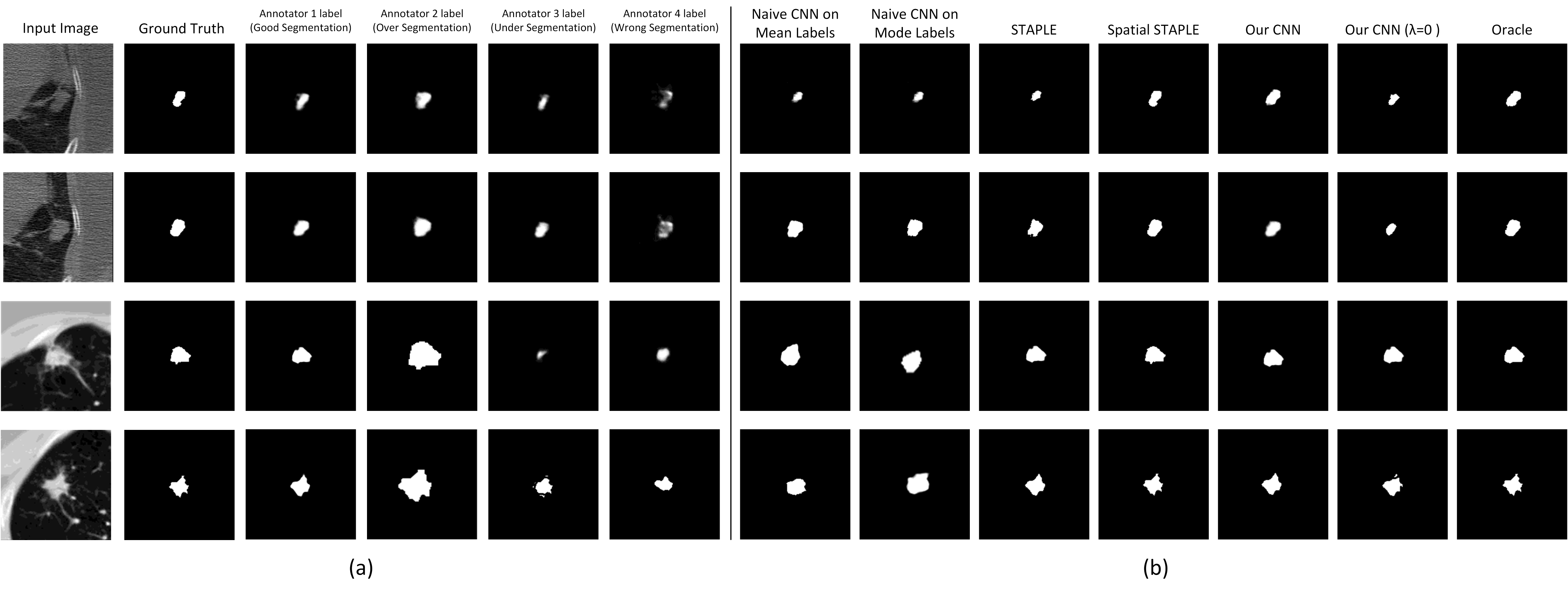}
    \caption{Visualisation of segmentation labels on LIDC-IDRI dataset: (a) GT and simulated annotator's segmentations (Annotator 1 - 5); (b) the predictions from the supervised models.)} 
    \label{LIDC segmentation}

\end{figure}

\begin{figure}[!h]
        \center
        \includegraphics[scale=0.2]{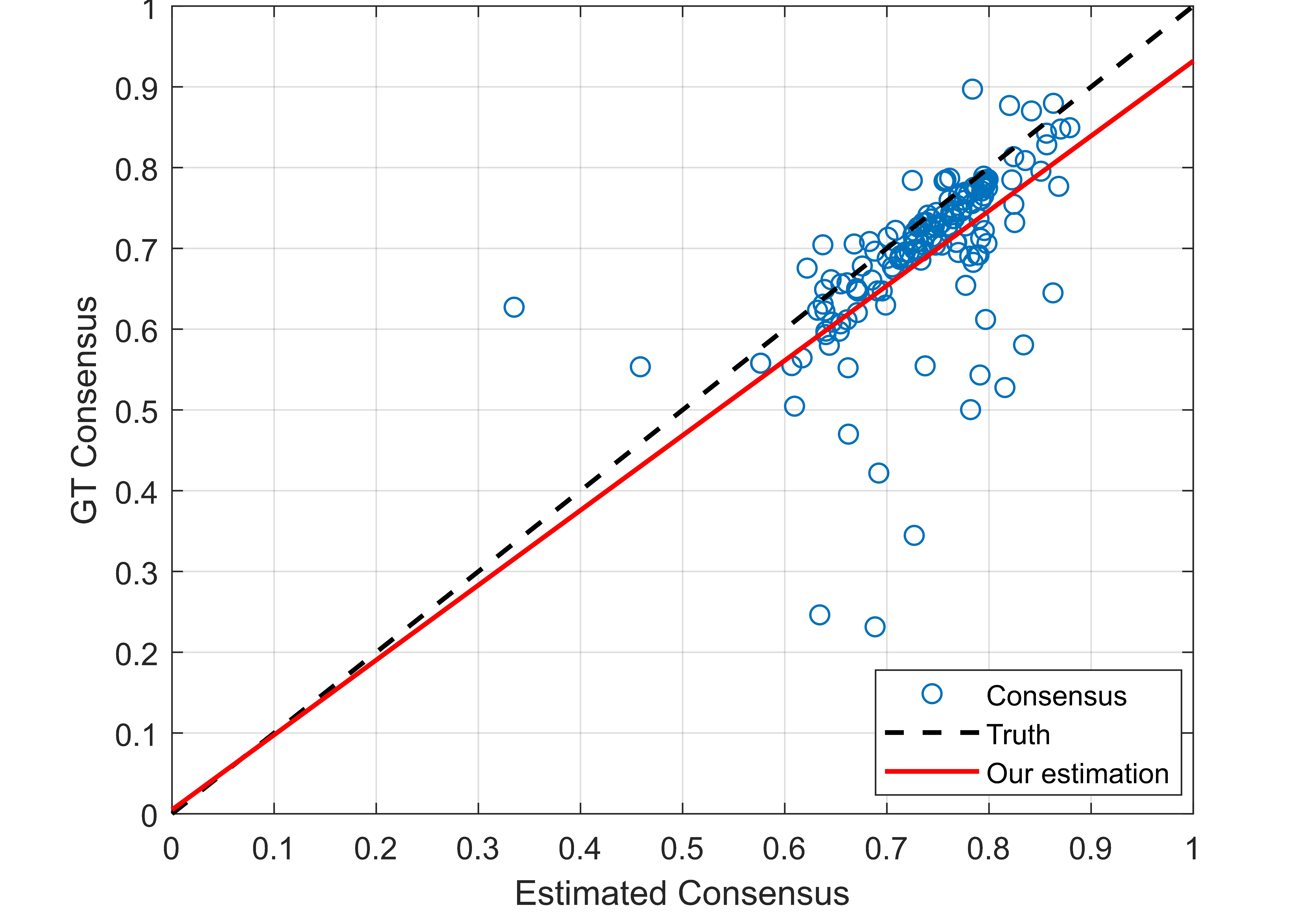}
        \caption{\footnotesize The consensus level amongst the estimated annotators is plotted against the ground truth on LIDC-IDRI dataset. The strong positive linear correlation shows that the variation in the inter-reader variability on different input examples (e.g., some examples are more ambiguous than others) is captured well. We do note, however, that the inter-reader variation seems more under-estimated for ``easy'' (i.e., higher consensus) samples. 
        }
        \label{consensus}
\end{figure}

\begin{figure}
        \center
        \includegraphics[scale=0.245]{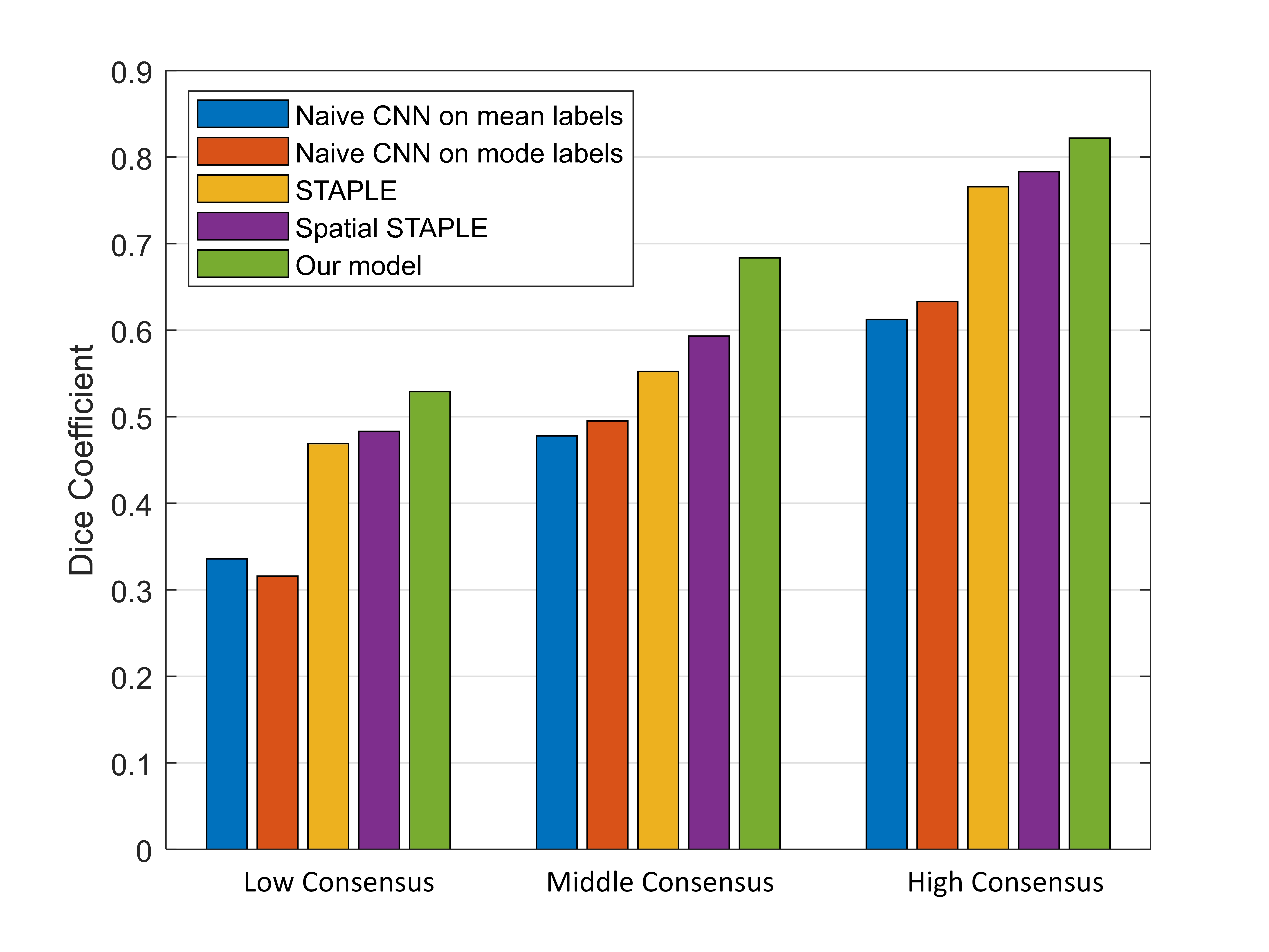}
        \caption{\footnotesize Segmentation performance on 3 different subgroups of the LIDC-IDRI dataset with varying levels of inter-reader agreement. Our method shows \textit{consistent} improvement over the baselines and the competing methods in all groups, showing its enhanced ability to segment challenging examples (i.e., low-consensus cases). }
        \label{consensus dice}
\end{figure}

\clearpage
\subsection{Low-rank Approximation}\label{Appendix low rank}
Here we show our preliminery results on the employed low-rank approximation of confusion matrices for BraTS dataset, precluded in the main text. Table.~\ref{low_rank_result} compares the performance of our method with the default implementation and the one with rank-1 approximation. We see that the low-rank approximation can halve the number of parameters in CMs and the number of floating-point-operations (FLOPs) in computing the annotator prediction while resonably retaining the performance on both segmentation and CM estimation. We note, however, the practical gain of this approximation in this task is limited since the number of classes is limited to 4 as indicated by the marginal reduction in the overall GPU usage for one example. We expect the gain to increase when the number of classes is larger as shown in Fig.~\ref{fig:low_rank_complexity}. 

\begin{table}[H]
	\center
	\footnotesize
	\begin{tabular}{@{}llllll}
		\hline
		 Rank & Dice & CM estimation & GPU Memory & No. Parameters &  FLOPs   \\
		\hline	
		Default  & 53.47 $\pm $ 0.24  & 0.1185 $\pm$ 0.0056   & 2.68GB &  589824 &  1032192  \\
		rank 1 & 50.56 $\pm$ 2.00  & 0.1925 $\pm$ 0.0314 & 2.57GB  &    294912    & 405504\\
		\hline
	\end{tabular}%
	\vspace{2mm}
\caption{\footnotesize Comparison between the default implementation and low-rank (=1) approximation on BraTS. GPU memory consumption is estimated for the case with batch size = 1. Bot the total number of variables in the confusion matrices, and the number of FLOPs required in computing the annotator predictions. }
\label{low_rank_result}
\end{table}

Lastly, we also describe the details of the devised low-rank approximation. Analogous to Chandra and Kokkinos's work \cite{chandra2017dense} where they employed a similar approximation for estimating the pairwise terms in densely connected CRF,  we parametrise the spatial CM, $\hat{\textbf{A}}_{\phi}^{(r)}(\textbf{x}) = \textbf{B}_{1, \phi}^{(r)}(\textbf{x})\cdot\textbf{B}^{T,(r)}_{2, \phi}(\textbf{x})$ as a product of two smaller rectangular matrices $\textbf{B}_{1, \phi}^{(r)}$ and $\textbf{B}_{2, \phi}^{(r)}$ of size $W\times H \times L \times l$ where $l << L$. In this case, the annotator network outputs $\textbf{B}_{1, \phi}^{(r)}$ and $\textbf{B}_{2, \phi}^{(r)}$ for each annotator in lieu of the full CM. Two separate rectangular matrices are used here since the confusion matrices are not necessarily symmetric. Such low-rank approximation reduces the total number of variables to $2WHLl$ from $WHL^2$ and the number of floating-point operations (FLOPs) to $WH(4L(l - 0.25) - l)$ from $WH(2L-1)L$. Fig.~\ref{fig:low_rank_complexity} shows that the time and space complexity of the default method grow quadratically in the number of classes while the low-rank approximations have linear growth. 

\begin{figure}[H]
    \centering
    \includegraphics[width=0.5\linewidth]{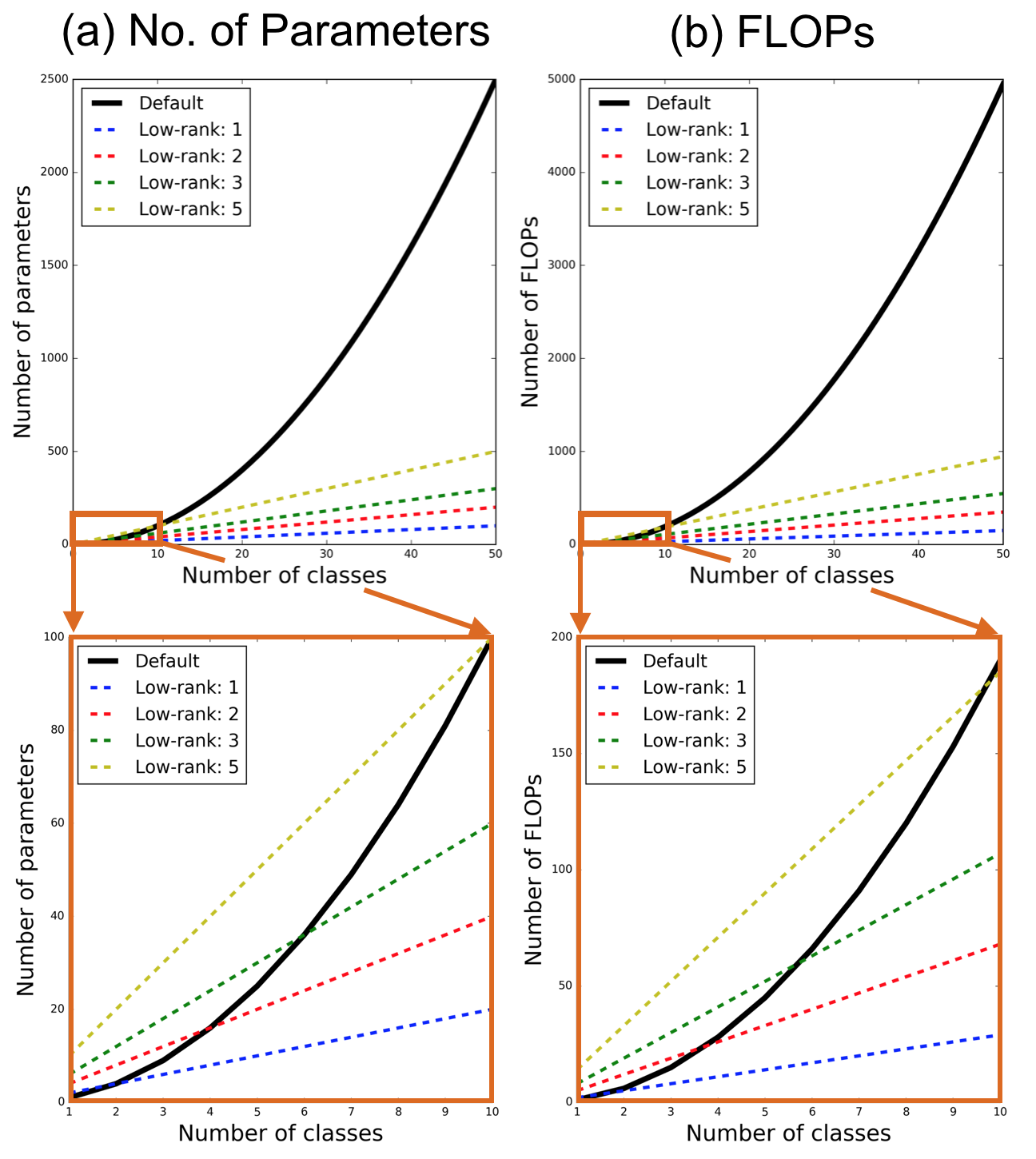}
    \caption{\footnotesize Comparison of time and space complexity between the default implementation and the low-rank counterparts. (a) compares the number of parameters in the confusion matrices while (b) shows the number of FLOPs required to compute the annotator predictions (the product between the confusion matrices and the estimated true segmentation probabilities).}
    \label{fig:low_rank_complexity}
\end{figure}

\clearpage
\section{Implementation details}
\vspace{-3mm}
Our method is implemented in Pytorch \cite{fey2019fast}. Our network is based on a 4 down-sampling stages 2D U-net \cite{ronneberger2015u}, the channel numbers for each encoders are 32, 64, 128, 256, we also replaced the batch normalisation layers with instance normalisation. Our segmentation network and annotator network share the same parameters apart from the last layer in the decoder of U-net, essentially, the overall architecture is implemented as an U-net with multiple output last layers: one for prediction of true segmentation; others for predictions of noisy segmentation respectively. For segmentation network, the output of the last layer has c channels where c is the number of classes. On the other hand, for annotator network, by default, the output of the last layer has  $L \times L$ number of channels for estimating confusion matrices at each spatial location; when low-rank approximation is used, the output of the last layer has 2 $\times$ L $\times l$  number of channels. The Probabilistic U-net implementation is adopted from \url{https://github.com/stefanknegt/Probabilistic-Unet-Pytorch}, for fair comparison, we adjusted the number of the channels and the depth of the U-net backbone in Probabilistic U-net to match with our networks. All of the models were trained on a NVIDIA RTX 208 for at least 3 times with different random initialisations to compute the mean performance and its standard deviation (run 3 times of the experiments with the same initialization). The Adam \cite{kingma2014adam} optimiser was used in all experiments with the default hyper-parameter settings. We also provide all of the hyper-parameters of the experiments for each data set in Table~\ref{Experiments_settings}. We also kept the training details the same between the baselines and our method.
\vspace{-2mm}
\begin{table}[!h]
	\center
	\scriptsize
	\begin{tabular}{@{}lllllc}
		\hline
		 Data set & Learning Rate & Epoch & Batch Size & Augmentation & weight for regularisation ($\lambda$) \\
		\hline	
		MNIST  & 1e-4  & 60 & 2 & Random flip & 0.7 \\
		MS & 1e-4 & 55  & 2 & Random flip & 0.7\\
		BraTS & 1e-4 & 60 & 8 & Random flip & 1.5 \\
		LIDC & 1e-4  & 75 & 4 & Random flip & 0.9 \\
		\hline
	\end{tabular}%
\caption{\footnotesize Hyper-parameters used for respective datasets.}
\label{Experiments_settings}
\end{table}

\vspace{-8mm}
\subsection{Pytorch implementation of loss function}
\vspace{-2mm}
The following is the Pytorch implementation of the loss function in eq.~\eqref{eq4}. We also intend to clean up the whole codebase and release in the final version. 


\vspace{-5mm}

\begin{lstlisting}[basicstyle=\scriptsize, language=Python]
import torch 
import torch.nn as nn

def loss_function(p, cms, ts, alpha):
    """ 
    Args:
        p (torch.tensor): unnormalised probabilities from the segmentation network
            of size (batch, num classes, height, width) 
        cms (list of torch.tensors): a list of estimated unnormalised (but positive) 
            confusion matrices  from the annotator network, each with size
            (batch, num classes, num classes, height, width) 
        ts (list of torch.tensors): a list of segmentation labels from noisy annotators, 
            each with size (batch, num classes, height, width) 
        alpha (float): weight for the trace regularisation 
    """
    main_loss = 0.0
    regularisation = 0.0
    b, c, h, w = p.size()  # b: batch size; c: class number, h: height, w: width
    p = nn.Softmax(dim=1)(p) 
    
    # reshape p: [b, c, h, w] => [b*h*w, c, 1]
    p = p.view(b, c, h*w).permute(0, 2, 1).contiguous()
    p = p.view(b*h*w, c, 1)
    
    # iterate over the confusion matrices & noisy labels from different annotators
    for j, (cm, t) in enumerate(zip(cms, ts)):
        # cm: confusion matrix of noisy annotator j
        # t: label for noisy segmentation of noisy annotator j
        # reshape cm: [b, c, c, h, w]=> [b*h*w, c, c]
        cm = cm.view(b, c**2, h * w).permute(0, 2, 1).contiguous()
        cm = cm.view(b*h*w, c**2).view(b*h*w, c, c)
        cm = cm / cm.sum(1, keepdim=True) # normalise the confusion matrix along columns
        # compute the estimated annotator's noisy segmentation probability
        p_n = torch.bmm(cm, p).view(b*h*w, c) 
        # reshape p_n: [b*h*w, c , 1] => [b, c, h, w] 
        p_n = p_n.view(b, h*w, c).permute(0, 2, 1).contiguous().view(b, c, h, w)
        # calculate the pixelwise cross entripy loss 
        main_loss += nn.CrossEntropyLoss(reduction='mean')(p_n, t.view(b, h, w).long())
        # calculate the mean trace
        regularisation =  torch.trace(torch.sum(cm, dim=0)).sum() / (b*h*w)
    
    regularisation = alpha*regularisation
    return main_loss + regularisation
\end{lstlisting}

\clearpage
\section{Proof of Theorem~\ref{theorem:main}}

We first show a specific case of Theorem~\ref{theorem:main} when there is only a single annotator, and subsequently extend it to the scenario with multiple annotators. Without loss of generality, we show the result for an arbitrary choice of a pixel in a given input image $\mathbf{x} \in \mathbb{R}^{W\times H \times C}$. Specifically, let us denote the estimated confusion matrix (CM) of the annotator at the $(i, j)^{\text{th}}$ pixel by $\hat{\textbf{A}} := [\hat{\textbf{A}_\phi}(\mathbf{x})_{ij}] \in [0, 1]^{L\times L}$, and suppose the true class of this pixel is $k \in [1, ...,L]$ i.e., $\textbf{p}(\textbf{x}) = \mathbf{e}_k$ where $\mathbf{e}_k$ denotes the $k^{\text{th}}$ elementary basis. Let $\hat{\textbf{p}}_\theta(\textbf{x})$ denote the $L$-dimensional estimated label distribution at the corresponding pixel (instead of over all the whole image). 
\vspace{3mm}
\begin{lemma}\label{lemma:1}
  If the annotator's segmentation probability is fully captured by the model for the $(i, j)^{\text{th}}$ pixel in image $\mathbf{x}$ i.e., $\hat{\textbf{A}}\cdot \hat{\textbf{p}}_\theta(\textbf{x})=\textbf{A}\cdot \textbf{p}(\textbf{x}) $, and both $\hat{\textbf{A}}$, $\textbf{A}$ satisfy that $a_{kk} > a_{kj}$ for $j \neq k$ and $\hat{a}_{ii} > \hat{a}_{ij}$ for all $i, j$ such that $j \neq i$, then $\text{tr}(\hat{\textbf{A}})$ is minimised when $\hat{\textbf{A}}=\textbf{A}$. Furthermore, if $\text{tr}(\hat{\textbf{A}})=\text{tr}(\textbf{A})$, then the true label is fully recovererd i.e., $ \hat{\textbf{p}}_\theta(\textbf{x}) = \textbf{p}(\textbf{x})$ and the $k^{\text{th}}$ column in $\hat{\textbf{A}}$, $\textbf{A}$ are the same. 
  
\end{lemma}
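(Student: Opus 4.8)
The plan is to prove the single-annotator statement (Lemma~\ref{lemma:1}) first by a direct componentwise analysis of the zero-cross-entropy constraint, and then, when proving Theorem~\ref{theorem:main}, to lift it to $R$ annotators by passing to the weighted average confusion matrix $\hat{\textbf{A}}^*$. The whole argument is elementary once the right identity is written down; the only genuinely delicate point is what happens to the entries of $\hat{\textbf{A}}$ that the loss cannot see.

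First I would unpack the hypothesis $\hat{\textbf{A}}\,\hat{\textbf{p}}_\theta(\textbf{x})=\textbf{A}\,\textbf{p}(\textbf{x})$. Since $\textbf{p}(\textbf{x})=\mathbf{e}_k$, the right-hand side is exactly the $k$-th column $\textbf{a}_k=(a_{1k},\dots,a_{Lk})^\top$ of $\textbf{A}$, so the constraint is the linear system $\sum_{j=1}^L \hat{a}_{ij}\,(\hat{p}_\theta)_j = a_{ik}$ for every $i\in\{1,\dots,L\}$, where $\hat{\textbf{p}}_\theta=\hat{\textbf{p}}_\theta(\textbf{x})$ is a probability vector and each column of $\hat{\textbf{A}}$ is a probability vector. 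The key structural remark is that only the $k$-th column of $\textbf{A}$ appears here, which is precisely why in the single-ground-truth regime one can only hope to identify that column, and why the equality case of the Lemma is phrased in terms of the $k$-th column.

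The core step is the inequality $\hat{a}_{kk}\ge a_{kk}$. Taking $i=k$ in the system gives $a_{kk}=\sum_j \hat{a}_{kj}(\hat{p}_\theta)_j$; applying the diagonal-dominance hypothesis $\hat{a}_{kk}\ge\hat{a}_{kj}$ (strict for $j\neq k$) together with $\sum_j(\hat{p}_\theta)_j=1$ yields $a_{kk}=\sum_j\hat{a}_{kj}(\hat{p}_\theta)_j\le\hat{a}_{kk}$. Since the $k$-th summand of $\text{tr}(\hat{\textbf{A}})$ is precisely $\hat{a}_{kk}$, and the remaining diagonal entries of $\hat{\textbf{A}}$ are bounded below by $1/L$ using diagonal dominance and column-stochasticity — which is exactly the value these entries take in the true CM under the single-ground-truth convention that the un-probed columns are uniform — the trace is minimised by $\hat{\textbf{A}}=\textbf{A}$, $\hat{\textbf{p}}_\theta=\textbf{p}(\textbf{x})$, giving the first claim. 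For the uniqueness claim, suppose $\text{tr}(\hat{\textbf{A}})=\text{tr}(\textbf{A})$. Since every diagonal term of $\hat{\textbf{A}}$ is at least the corresponding term of $\textbf{A}$, equality of traces forces $\hat{a}_{kk}=a_{kk}$, hence equality throughout the chain above: $\sum_j\hat{a}_{kj}(\hat{p}_\theta)_j=\hat{a}_{kk}\sum_j(\hat{p}_\theta)_j$. As $(\hat{p}_\theta)_j\ge 0$ and $\hat{a}_{kj}<\hat{a}_{kk}$ strictly for every $j\neq k$, this is possible only if $(\hat{p}_\theta)_j=0$ for all $j\neq k$, i.e. $\hat{\textbf{p}}_\theta(\textbf{x})=\mathbf{e}_k=\textbf{p}(\textbf{x})$; substituting back into the system gives $\hat{a}_{ik}=a_{ik}$ for all $i$, so the $k$-th columns of $\hat{\textbf{A}}$ and $\textbf{A}$ agree. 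The multi-annotator Theorem~\ref{theorem:main} then follows by summing the $R$ per-annotator constraints with the mixing weights $\pi_r$: this gives $\hat{\textbf{A}}^*\hat{\textbf{p}}_\theta(\textbf{x})=\textbf{A}^*\textbf{p}(\textbf{x})$, the hypotheses transfer verbatim to the averages $\textbf{A}^*,\hat{\textbf{A}}^*$, and because $\text{tr}(\hat{\textbf{A}}^*)=\sum_r\pi_r\,\text{tr}(\hat{\textbf{A}}^{(r)})$ is linear in the per-annotator traces, the same argument shows $\{\textbf{A}^{(r)}\}$ minimises it and that at any minimiser $\hat{\textbf{p}}_\theta=\mathbf{e}_k$, pinning the $k$-th column of every $\hat{\textbf{A}}^{(r)}$ to that of $\textbf{A}^{(r)}$.

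I expect the main obstacle to be making precise the part of the claim concerning the non-$k$ diagonal entries of $\hat{\textbf{A}}$: at a one-hot pixel these are not constrained at all by the zero-cross-entropy condition, so the statement ``$\text{tr}(\hat{\textbf{A}})$ is minimised at $\hat{\textbf{A}}=\textbf{A}$'' only makes sense once one commits to the convention that the un-probed columns of the true CM are uniform and invokes diagonal dominance together with column-stochasticity to lower-bound each remaining $\hat{a}_{ii}$ by $1/L$. Everything else — the $\hat{a}_{kk}\ge a_{kk}$ bound, the forced collapse $\hat{\textbf{p}}_\theta=\mathbf{e}_k$ in the equality case, and the averaging step for $R$ annotators — is routine.
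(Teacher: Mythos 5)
Your proof is correct and takes essentially the same route as the paper's: the inner-product bound $a_{kk}\le\hat{a}_{kk}$ from the $k$-th row of the constraint, the $1/L$ lower bound on the remaining diagonal entries under the single-ground-truth convention for the un-probed columns, and the strictness of the diagonal-dominance inequality forcing $\hat{\textbf{p}}_\theta(\textbf{x})=\mathbf{e}_k$ in the equality case. The only cosmetic difference is that you derive the collapse to $\mathbf{e}_k$ directly from the equality condition of the weighted-average inequality, whereas the paper argues by contradiction after dividing by $1-\hat{p}_k$; these are the same argument.
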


\begin{proof}
We first show that the $k^{\text{th}}$ diagonal element in $\textbf{A}$ is smaller than or equal to its estimate in $\hat{\textbf{A}}$. Since $\textbf{p}(\textbf{x}) = \mathbf{e}_k$ is a one-hot vector, $\hat{\textbf{A}}\cdot \hat{\textbf{p}}_\theta(\textbf{x})=\textbf{A}\cdot \textbf{p}(\textbf{x}) $ holds and $\hat{a}_{kk} > \hat{a_{kj}} \forall j \neq k$, it follows that: 
\begin{align}\label{eq:side1}
    a_{kk} &= \Big{\langle} [\hat{a}_{k1},...,\hat{a}_{kL}], \ \ \hat{\textbf{p}}_\theta(\textbf{x}) \Big{\rangle} \\
    &\leq \Big{\langle} [\hat{a}_{kk}, ...,\hat{a}_{kk}], \ \ \hat{\textbf{p}}_\theta(\textbf{x})\Big{\rangle} = \hat{a}_{kk}. \label{eq:trace_inequality}
\end{align}
The possibility of equality in the above comes from the fact that all entries in $\hat{\textbf{p}}_\theta(\textbf{x})$ except the $k$th element could be zeros. Now, the assumption that there is a single ground truth label $k$ for the $(i, j)^{\text{th}}$ pixel means that all the values of the true CM, $\textbf{A}$ are uniformly equal to $1/L$ except the $k^{\text{th}}$ column. In addition, since the diagonal dominance of the estimated CM means each $\hat{a}_{ii}$ is at least $1/L$, we have that 
\begin{align*}
    \text{tr}(\textbf{A}) = \frac{L-1}{L} + a_{kk} \,\leq\,  \sum_{j\neq k}\hat{a}_{jk}  + \hat{a}_{kk} =  \text{tr}(\hat{\textbf{A}}).
\end{align*}

It therefore follows that when $\hat{\textbf{A}} = \textbf{A}$ holds, the trace of $\text{tr}(\hat{\textbf{A}})$ is the smallest. Now, we show that when this holds i.e., $\text{tr}(\textbf{A})= \text{tr}(\hat{\textbf{A}})$, then the $k^{\text{th}}$ columns of the two matrices match up. 

By way of contradiction, let us assume that there exists a class $k' \neq k$ for which the estimated label probability is non-zero i.e., $\hat{p}_{k'}:=[\hat{\textbf{p}}_\theta(\textbf{x})]_{k'} > 0$. This implies that $1 - \hat{p}_{k} > 0 $.  From eq.~\eqref{eq:trace_inequality}, if the trace of $\textbf{A}$ and $\hat{\textbf{A}}$ are the same, then $ a_{kk} = \hat{a}_{kk}$ also holds and thus we have $ \hat{a}_{kk} = \sum_{j}  \hat{a}_{kj}\hat{p}_{j} $. By rearranging this equality and dividing both sides by $1 - \hat{p}_{k}$, we obtain $\hat{a}_{kk} = \sum_{j\neq k} \frac{\hat{p}_{j}}{1 - \hat{p}_{k}}\hat{a}_{kj}$. Now, as we have $\hat{a}_{kk} > \hat{a}_{kj}, j\neq k$, it follows that
\begin{align*}
    \hat{a}_{kk} < \hat{a}_{kk} \sum_{j\neq k} \frac{\hat{p}_{j}}{1 - \hat{p}_{k}} = \hat{a}_{kk}
\end{align*}
which is false. Therefore, the trace quality implies $\hat{p}_{k} = 1$ and thus from $\hat{\textbf{A}}\cdot \hat{\textbf{p}}_\theta(\textbf{x})=\textbf{A}\cdot \textbf{p}(\textbf{x})$, we conclude that the $k^{\text{th}}$ columns of $\hat{\textbf{A}}$ and $\textbf{A}$ are the same.




\end{proof}

We note that the equivalent result for the expectation of the annotator's CM over the data population was provided in \cite{sukhbaatar2014training} and \cite{tanno2019learning}. The main difference is, as described in the main text, that we show a slightly weaker version of their result in a sample-specific scenario. 

Now, we show that the main theorem follows naturally from the above lemma. As a reminder, we recite the theorem below. 


\newpage

\textbf{Theorem~~\ref{theorem:main}.}  \textit{For the $(i, j)^{\text{th}}$ pixel in a given image $\mathbf{x}$, we define the mean confusion matrix (CM) $\textbf{A}^*:= \sum_{r=1}^R \pi_r\hat{\textbf{A}}^{(r)}$ and its estimate $\hat{\textbf{A}}^{*}:=\sum_{r=1}^R \pi_r \hat{\textbf{A}}^{(r)}$ where $\pi_r\in [0,1]$ is the probability that the annotator $r$ labels image $\mathbf{x}$. If the annotator's segmentation probabilities are perfectly modelled by the model for the given image i.e., $\hat{\textbf{A}}^{(r)}\hat{\textbf{p}}_\theta(\textbf{x})=\textbf{A}^{(r)}\textbf{p}(\textbf{x}) \forall r=1,...,R$, and the average true confusion matrix $\textbf{A}^{*}$ at a given pixel and its estimate $\hat{\textbf{A}}^{*}$ satisfy that $a^*_{kk} > a^*_{kj}$ for $j \neq k$ and $\hat{a}^*_{ii} > \hat{a}^*_{ij}$ for all $i, j$ such that $j \neq i$, then  $\textbf{A}^{(1)}, ..., \textbf{A}^{(R)} = \text{argmin }_{\hat{\textbf{A}}^{(1)}, ..., \hat{\textbf{A}}^{(R)}}\Big{[}\text{tr}(\hat{\textbf{A}}^{*})\Big{]}$ and such solutions are \textbf{unique} in the $k^{\text{th}}$ columns where $k$ is the correct pixel class.  }

\begin{proof}
A direct application of Lemma~\ref{lemma:1} shows firstly that $\text{tr}(\hat{\textbf{A}}^{*})$ is minimised when $ \hat{\textbf{A}}^{(r)} = \textbf{A}^{(r)}$ for all $r=1,...,R$ (since that ensures $\textbf{A}^{*} = \hat{\textbf{A}}^{*}$). Secondly, it implies that minimising $\text{tr}(\hat{\textbf{A}}^{*})$ yields $ \hat{\textbf{p}}_\theta(\textbf{x}) = \textbf{p}(\textbf{x})$. Because we assume that annotators' noisy labels are correctly modelled i.e., $\hat{\textbf{A}}^{(r)}\hat{\textbf{p}}_\theta(\textbf{x})=\textbf{A}^{(r)}\textbf{p}(\textbf{x}) \forall r=1,...,R$, it therefore follows that the $k^{\text{th}}$ column in $\hat{\textbf{A}}^{(r)}$ and $\textbf{A}^{(r)}$ are the same. 

\end{proof}

\end{document}